\def\Rset{\mathbb{R}}
\def\E{\mathbb{E}}
\DeclareMathOperator*{\argmin}{\rm argmin}
\newtheorem*{rep@theorem}{\rep@title}
\newcommand{\newreptheorem}[2]{%
\newenvironment{rep#1}[1]{%
 \def\rep@title{#2 \ref{##1}}%
 \begin{rep@theorem}}%
 {\end{rep@theorem}}}
\newcommand{\A}{\mathcal{A}}
\newcommand{\cT}{\mathscr{T}}
\newcommand{\n}{\mathfrak{n}}
\newcommand{\R}{\text{Reg}}
\newcommand{\Sur}{\text{Sur}}
\newcommand{\Ind}{\mathds{1}}
\newcommand{\EQ}{\gets}
\newcommand{\AND}{\mbox{{\bf and }}}
\newcommand{\ignore}[1]{}
\newtheorem{lemma}{Lemma}
\newtheorem{theorem}{Theorem}
\newtheorem{proposition}{Proposition}
\newtheorem{corollary}{Corollary}
\newtheorem{definition}{Definition}
\begin{document}

\title{Revenue Optimization in Posted-Price Auctions with Strategic Buyers}

\author{
{\bf Mehryar Mohri} \\
Courant Institute
and Google Research\\
251 Mercer Street\\
New York, NY 10012\\
\texttt{\small mohri@cims.nyu.edu} 
\And
{\bf Andres Mu\~noz Medina} \\
Courant Institute\\
251 Mercer Street\\
New York, NY 10012\\
\texttt{\small munoz@cims.nyu.edu} 
}

\maketitle

\begin{abstract}

  We study revenue optimization learning algorithms for posted-price
  auctions with strategic buyers. We analyze a very broad family of
  monotone regret minimization algorithms for this problem, which
  includes the previously best known algorithm, and show that no
  algorithm in that family admits a strategic regret more favorable
  than $\Omega(\sqrt{T})$. We then introduce a new algorithm that
  achieves a strategic regret differing from the lower bound only by a
  factor in $O(\log T)$, an exponential improvement upon the previous
  best algorithm. Our new algorithm admits a natural analysis and
  simpler proofs, and the ideas behind its design are general. We also
  report the results of empirical evaluations comparing our algorithm
  with the previous state of the art and show a consistent exponential
  improvement in several different scenarios. 

\end{abstract}

\section{Introduction}

Auctions have long been an active area of research in Economics and
Game Theory \citep{vickrey2012, Milgrom1982,ostrovsky2011reserve}. In
the past decade, however, the advent of online advertisement has
prompted a more algorithmic study of auctions,
including the design of learning algorithms for revenue maximization for generalized
second-price auctions or second-price auctions with reserve
\citep{Cesa-BianchiGentileMansour2013,mohrimunoz2014,DiWeiLiWei14}.

These studies have been largely motivated by the widespread use of
AdExchanges and the vast amount of historical data thereby
collected -- AdExchanges are advertisement selling platforms
  using second-price auctions with reserve price to allocate
  advertisement space. Thus far, the learning algorithms proposed for
revenue maximization in these auctions critically rely on the
assumption that the bids, that is, the outcomes of auctions, are drawn
i.i.d.\ according to some unknown distribution. However, this
assumption may not hold in practice. In particular, with the knowledge
that a revenue optimization algorithm is being used, an advertiser
could seek to mislead the publisher by under-bidding.  In fact,
consistent empirical evidence of strategic behavior by advertisers has
been found by \citet{Edelman07}.  This motivates
the analysis presented in this paper of the interactions between
sellers and \emph{strategic buyers}, that is, buyers that may act
non-truthfully with the goal of maximizing their surplus.

The scenario we consider is that of \emph{posted-price auctions},
which, albeit simpler than other mechanisms, in fact matches a common
situation in AdExchanges where many auctions admit a single bidder. In
this setting, second-price auctions with reserve are equivalent to
posted-price auctions: a seller sets a reserve price for a good and
the buyer decides whether or not to accept it (that is to bid higher
than the reserve price). In order to capture the buyer's strategic
behavior, we will analyze an online scenario: at each time $t$, a
price $p_t$ is offered by the seller and the buyer must decide to
either accept it or leave it. This scenario can be modeled as a
two-player repeated non-zero sum game with incomplete information,
where the seller's objective is to maximize his revenue, while the
advertiser seeks to maximize her surplus as described in more detail
in Section~\ref{sec:setup}.

The literature on non-zero sum games is very rich
\citep{nachbar1997,NachbarBayesian, morris1994}, but much of
the work in that area has focused on characterizing different types of
equilibria, which is not directly relevant to the algorithmic
questions arising here. Furthermore, the problem we consider admits a
particular structure that can be exploited to design efficient revenue
optimization algorithms.

From the seller's perspective, this game can also be viewed as a
bandit problem \citep{kuleshov2010, robbins1985} since only the
revenue (or reward) for the prices offered is accessible to the
seller. \citet{KleinbergLeighton} precisely
studied this continuous bandit setting under the assumption of an
oblivious buyer, that is, one that does not exploit the seller's
behavior (more precisely, the authors assume that at each round the
seller interacts with a different buyer). The authors presented a
tight regret bound of $\Theta(\log \log T)$ for the scenario of a
buyer holding a fixed valuation and a regret bound of
$O(T^{\frac{2}{3}})$ when facing an adversarial buyer by using an
elegant reduction to a discrete bandit problem. However, as argued by
\cite{KAU}, when dealing with a \emph{strategic buyer}, the usual
definition of regret is no longer meaningful. Indeed, consider the
following example: let the valuation of the buyer be given by $v \in
[0,1]$ and assume that an algorithm with sublinear regret such as Exp3
\citep{AuerBianchiFreund} or UCB \citep{AuerBianchiFischer} is used
for $T$ rounds by the seller. A possible strategy for the buyer,
knowing the seller's algorithm, would be to accept prices only if they
are smaller than some small value $\epsilon$, certain that the seller
would eventually learn to offer only prices less than $\epsilon$. If
$\epsilon \ll v$, the buyer would considerably boost her
surplus while, in theory, the seller would have not incurred a large
regret since in hindsight, the best fixed strategy would have been to
offer price $\epsilon$ for all rounds. This, however is clearly not
optimal for the seller.
The stronger notion of policy regret introduced by \citet{Arora} has
been shown to be the appropriate one for the analysis of bandit
problems with adaptive adversaries. However, for the example just
described, a sublinear policy regret can be similarly achieved. Thus,
this notion of regret is also not the pertinent one for the study of
our scenario.  

We will adopt instead the definition of \emph{strategic-regret}, which
was introduced by \citet{KAU} precisely for the study of this problem.
This notion of regret also matches the concept of \emph{learning loss}
introduced by \citep{Agrawal} when facing an oblivious
adversary. Using this definition, \citet{KAU} presented both upper and
lower bounds for the regret of a seller facing a strategic buyer and
showed that the buyer's surplus must be discounted over time in order
to be able to achieve sublinear regret (see
Section~\ref{sec:setup}). However, the gap between the upper and lower
bounds they presented is in $O(\sqrt T)$.  In the following, we
analyze a very broad family of monotone regret minimization algorithms
for this problem (Section~\ref{sec:monotone}), which includes the
algorithm of \citet{KAU}, and show that no algorithm in that family
admits a strategic regret more favorable than $\Omega(\sqrt{T})$.
Next, we introduce a nearly-optimal algorithm that achieves a
strategic regret differing from the lower bound at most by a factor in
$O(\log T)$ (Section~\ref{sec:optimal}). This represents an
exponential improvement upon the existing best algorithm for this
setting. Our new algorithm admits a natural analysis and simpler
proofs.  A key idea behind its design is a method deterring the buyer
from \emph{lying}, that is rejecting prices below her valuation.

\section{Setup}
\label{sec:setup}

We consider the following game played by a buyer and a seller. A
good, such as an advertisement space, is repeatedly offered for sale by
the seller to the buyer over $T$ rounds. The buyer holds a private
valuation $v \in [0, 1]$ for that good.  At each round $t = 1, \ldots,
T$, a price $p_t$ is offered by the seller and a decision $a_t \in
\{0, 1\}$ is made by the buyer. $a_t$ takes value $1$ when the buyer
accepts to buy at that price, $0$ otherwise. We will say that a buyer
\emph{lies} whenever $a_t = 0$ while $p_t < v$. At the beginning of
the game, the algorithm $\A$ used by the seller to set prices is
announced to the buyer. Thus, the buyer plays strategically against this
algorithm. The knowledge of $\A$ is a standard assumption in mechanism
design and also matches the practice in AdExchanges.

For any $\gamma \in (0, 1)$, define the discounted
surplus of the buyer as follows: 
\begin{equation}
\label{eq:surplus}
  \Sur(\A, v) = \sum_{t = 1}^T \gamma^{t-1}a_t(v - p_t).
\end{equation}
The value of the \emph{discount factor} $\gamma$ indicates the
strength of the preference of the buyer for current surpluses versus
future ones. The performance of a seller's algorithm is measured by
the notion of \emph{strategic-regret} \citep{KAU} defined as follows:
\begin{equation}
\label{eq:regret}
  \R(\A, v) = T v - \sum_{t=1}^T a_t p_t.
\end{equation}
The buyer's objective is to maximize his discounted surplus, while
the seller seeks to minimize his regret. Note that, in view of
the discounting factor $\gamma$, the buyer is not fully
adversarial. The problem consists of designing algorithms achieving
sublinear strategic regret (that is a regret in $o(T)$).

The motivation behind the definition of strategic-regret is
straightforward: a seller, with access to the buyer's valuation, can
set a fixed price for the good $\epsilon$ close to this value. The
buyer, having no control on the prices offered, has no option but to
accept this price in order to optimize his utility. The revenue per
round of the seller is therefore $v - \epsilon$. Since there is no
scenario where higher revenue can be achieved, this is a natural
setting to compare the performance of our algorithm.

To gain more intuition about the problem, let us examine some of the
complications arising when dealing with a strategic buyer.  Suppose
the seller attempts to \emph{learn} the buyer's valuation $v$ by
performing a binary search. This would be a natural algorithm when
facing a truthful buyer. However, in view of the buyer's knowledge of
the algorithm, for $\gamma \gg 0$, it is in her best interest to lie
on the initial rounds, thereby quickly, in fact exponentially,
decreasing the price offered by the seller. The seller would then incur an
$\Omega(T)$ regret. A binary search approach is therefore ``too
aggressive''. Indeed, an untruthful buyer can manipulate the seller
into offering prices less than $v/2$ by lying about her value even just
once! This discussion suggests following a more conservative
approach. In the next section, we discuss a natural family of
conservative algorithms for this problem.

\section{Monotone algorithms}
\label{sec:monotone}

The following conservative pricing strategy was introduced by
\citet{KAU}. Let $p_1 = 1$ and $\beta < 1$. If price $p_t$ is rejected
at round $t$, the lower price $p_{t + 1} = \beta p_t$ is offered at
the next round. If at any time price $p_t$ is accepted, then this
price is offered for all the remaining rounds. We will denote this
algorithm by \texttt{monotone}. The motivation behind its design
is clear: for a suitable choice of $\beta$, the seller can slowly
decrease the prices offered, thereby pressing the buyer to reject many
prices (which is not convenient for her) before obtaining a favorable
price. The authors present an $O(T_\gamma \sqrt{T})$ regret bound for
this algorithm, with $T_\gamma = 1/(1 - \gamma)$. A more careful
analysis shows that this bound can be further tightened to
$O(\sqrt{T_\gamma T} + \sqrt{T})$ when the discount factor $\gamma$
is known to the seller.

Despite its sublinear regret, the \texttt{monotone}
algorithm remains sub-optimal for certain choices of $\gamma$. Indeed,
consider a scenario with $\gamma \ll 1$. For this setting, the buyer
would no longer have an incentive to lie, thus, an algorithm such as
binary search would achieve logarithmic regret, while the regret
achieved by the \texttt{monotone} algorithm is only guaranteed to be
in $O(\sqrt{T})$.

One may argue that the \texttt{monotone} algorithm is too specific
since it admits a single parameter $\beta$ and that perhaps a more
complex algorithm with the same monotonic idea could achieve a more
favorable regret. Let us therefore analyze a generic monotone
algorithm $\A_m$ defined by Algorithm~\ref{alg:monotone}. 

\begin{figure*}[ttt!]
\begin{minipage}[t]{.48 \textwidth}
\begin{algorithm}[H]
\caption{\small{Family of monotone algorithms.}}
\label{alg:monotone}
\begin{algorithmic}
\State \small{Let $p_1 = 1$ and $p_t \leq p_{t - 1}$ for $t = 2, \ldots T$}.
\State \small{$t \EQ 1$}
\State \small{$p \EQ p_t$}
\State \small{Offer price $p$}
\While{\small{(Buyer rejects $p$)} \AND \small{($t < T$)}}
\State \small{$t \EQ t + 1$}
\State \small{$p \EQ p_t$}
\State \small{Offer price $p$}
\EndWhile
\While{\small{($t < T$)}}
\State \small{$t \EQ t + 1$}
\State \small{Offer price $p$}
\EndWhile
\end{algorithmic}
\end{algorithm}
\end{minipage}
\hfill
\begin{minipage}[t]{.48 \textwidth}
\begin{algorithm}[H]
\caption{\small{Definition of $\A_r$.}}
\label{alg:reduction}
\begin{algorithmic}
\State \small{$\n = $ the root of $\cT(T)$}
\While{\small{Offered prices less than $T$}}
\State \small{Offer price $p_\n$}
\If{ \small{Accepted}}
\State \small{$\n = r(\n)$}
\Else
\State \small{Offer price $p_n$ for $r$ rounds}
\State \small{$\n = l(\n)$}
\EndIf
\EndWhile
\vspace{3em}
\end{algorithmic}
\end{algorithm}
\end{minipage}
\vskip -.15in
\end{figure*}

\begin{definition}
For any buyer's valuation $v \in [0, 1]$, define the \emph{acceptance time}
$\kappa^* = \kappa^*(v)$ as the first time a price offered
by the seller using algorithm $\A_m$ is accepted. 
\end{definition}

\begin{proposition}
\label{prop:monotonereg}
For any decreasing sequence of prices $(p_t)_{t = 1}^T$, there exists a
truthful buyer with valuation $v_0$  such that algorithm $\A_m$
suffers  regret of at least 
\begin{equation*}
  \R(\A_m, v_0) \geq \frac{1}{4}\sqrt{T - \sqrt{T}}.
\end{equation*}
\end{proposition}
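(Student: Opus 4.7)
The plan is to construct an adversarial valuation $v_0$ tailored to the seller's sequence $(p_t)$. For a truthful buyer, acceptance happens at $\kappa^* = \min\{t : p_t \leq v_0\}$, the seller's revenue is $(T - \kappa^* + 1) p_{\kappa^*}$, and therefore
\[
\R(\A_m, v_0) = T v_0 - (T - \kappa^* + 1) p_{\kappa^*}.
\]
Two basic adversary moves parametrized by a target acceptance time $t$ are central: (a) setting $v_0 = p_t$ forces $\kappa^* = t$ (when prices are strictly decreasing) and gives regret exactly $(t-1) p_t$; (b) setting $v_0$ just below $p_{t-1}$ gives $\kappa^* = t$ and regret approaching $T(p_{t-1} - p_t) + (t-1) p_t$.

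The proof splits at $k = \lceil \sqrt T \rceil$. In Case 1, $(k-1) p_k \geq \frac{1}{4}\sqrt{T - \sqrt T}$, and move (a) with $t = k$ immediately delivers the bound. In Case 2, $(k-1) p_k < \frac{1}{4}\sqrt{T - \sqrt T}$; using $k - 1 \geq \sqrt T - 1$ and the elementary inequality $3T - 7\sqrt T + 4 \geq 0$ (valid for $T \geq 1$), I obtain $p_k \leq 1/2$. Hence the cumulative drop $1 - p_k \geq 1/2$ is distributed over the at most $k - 1 \leq \sqrt T$ steps in $[2, k]$, so by the pigeonhole principle there exists $t^* \in [2, k]$ with $p_{t^*-1} - p_{t^*} \geq 1/(2(k-1)) \geq 1/(2\sqrt T)$. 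Move (b) at this $t^*$ then attains regret approaching at least $T/(2\sqrt T) = \sqrt T /2$, which strictly exceeds $\frac{1}{4}\sqrt{T - \sqrt T}$ because $3T + \sqrt T > 0$.

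The main obstacle is technical: move (b) achieves its target only as a supremum, so I verify the \emph{strict} inequality $\sqrt T /2 > \frac{1}{4}\sqrt{T-\sqrt T}$ to conclude that some $v_0 \in [p_{t^*}, p_{t^*-1})$ gives attained regret meeting the bound. A secondary subtlety is weakly decreasing sequences with flat regions, where $v_0 = p_k$ may yield $\kappa^* < k$; this is handled by applying the same case split to the subsequence of jump times and values, with the ``never-accept'' supremum $T p_T$ (obtained by letting $v_0 \to p_T^-$) replacing move (b) in the degenerate case where jumps are too scarce to populate the chosen $t^*$.
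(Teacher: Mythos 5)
Your argument is correct in its essentials and takes a genuinely different route from the paper. The paper randomizes the valuation uniformly over $[1/2,1]$ and derives the tradeoff $\E[\kappa^*]\,\E[v-p_{\kappa^*}]\geq 1/32$ via Cauchy--Schwarz (Lemma~\ref{lemma:expectation}), then balances the two expected regret terms; you instead run a deterministic case analysis at $k=\lceil\sqrt T\rceil$ and, when $p_k\leq 1/2$, extract a single consecutive gap $p_{t^*-1}-p_{t^*}\geq 1/(2\sqrt T)$ by pigeonhole and plant $v_0$ just under $p_{t^*-1}$. Your handling of the supremum issue (checking the strict slack $\sqrt T/2>\frac{1}{4}\sqrt{T-\sqrt T}$) is sound, and the pigeonhole case works verbatim even for weakly decreasing sequences, since the selected gap is strictly positive. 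What the paper's averaging argument buys is uniformity: it never needs to locate a specific large gap and treats flat price regions for free. What yours buys is an elementary, fully constructive adversary with no appeal to expectations.

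The one place the write-up falls short of a proof is Case~1 for weakly decreasing sequences, which Algorithm~\ref{alg:monotone} explicitly permits ($p_t\leq p_{t-1}$). Take $p_1=1$ and $p_t=0.3$ for all $t\geq 2$: for $T\geq 36$ this lands in your Case~1 since $(k-1)p_k\geq\frac{1}{4}\sqrt{T-\sqrt T}$, yet $v_0=p_k$ yields $\kappa^*=2$ and attained regret $0.3$, and the sketched repair via ``the subsequence of jump times'' is too vague to verify. The clean fix is to change the case split: branch on whether $p_k>1/2$ or $p_k\leq 1/2$. In the first branch, any $v_0\in(1/2,p_k)$ satisfies $p_t\geq p_k>v_0$ for all $t\leq k$, so either the buyer never accepts (regret $Tv_0>T/2$) or $\kappa^*>k$, in which case the regret is at least $(\kappa^*-1)v_0\geq\sqrt T/2$; the second branch is exactly your pigeonhole argument, with $1-p_k\geq 1/2$ recovered directly. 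Both branches then hold for flat regions with no subsequence surgery, and the resulting dichotomy mirrors the paper's own (namely, $\kappa^*(v_0)>\sqrt T$ for some $v_0\in[1/2,1]$ versus $\kappa^*(v)\leq\sqrt T$ for all such $v$).
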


\begin{proof}
By definition of the regret, we have $\R(\A_m, v) = v \kappa^* + (T -
\kappa^*)(v - p_{\kappa^*})$. We can consider two cases:
$\kappa^*(v_0) > \sqrt{T}$ for some $v_0 \in [1/2, 1]$ and
$\kappa^*(v) \leq\sqrt{T} $ for every $v \in [1/2, 1]$. In the former
case, we have $\R(\A_m, v_0) \geq v_0 \sqrt{T} \geq \frac{1}{2}
\sqrt{T}$, which implies the statement of the proposition. Thus,
we can assume the latter condition.

Let $v$ be uniformly distributed over $[\frac{1}{2}, 1]$. In view of
Lemma~\ref{lemma:expectation} (see Appendix~\ref{sec:monotonelower}), we have
\begin{align*}
\E[v \kappa^*]+ \E[(T - \kappa^*) (v - p_{\kappa^*})]
\geq \frac{1}{2} \E[\kappa^*] + (T - \sqrt{T}) \E[(v -
 p_{\kappa^*})]  
 \geq \frac{1}{2}\E[\kappa^*] + \frac{T - \sqrt{T}}{32
  \E[\kappa^*]}.
\end{align*}
The right-hand side is minimized for $\E[\kappa^*] = \frac{\sqrt{T -
    \sqrt{T}}}{4}$. Plugging in this value yields $\E[\R(\A_m, v)]
\geq \frac{\sqrt{T - \sqrt{T}}}{4}$, which implies the existence of
$v_0$ with $\R(\A_m, v_0) \geq \frac{\sqrt{T - \sqrt{T}}}{4}$.
\end{proof}
We have thus shown that any monotone algorithm $\A_m$ suffers a regret
of at least $\Omega(\sqrt{T})$, even when facing a truthful buyer. A
tighter lower bound can be given under a mild condition on the prices
offered.

\begin{definition}
  A sequence $(p_t)_{t = 1}^T$ is said to be \emph{convex} if it
  verifies $p_t - p_{t + 1} \geq p_{t + 1} - p_{t + 2}$ for $t = 1,
 \ldots, T - 2$.
\end{definition}
An instance of a convex sequence is given by the prices offered by the
\texttt{monotone} algorithm. A seller offering prices forming a
decreasing convex sequence seeks to control the number of lies
of the buyer by slowly reducing prices. The following
proposition gives a lower bound on the regret of any algorithm in this
family.

\begin{proposition}
\label{prop:lowerbound}
Let $(p_t)_{t = 1}^T$ be a decreasing convex sequence of prices. There
exists a valuation $v_0$ for the buyer such that the regret of the
monotone algorithm defined by these prices is $\Omega(\sqrt{T
C_\gamma} + \sqrt{T})$, where $C_\gamma = \frac{\gamma}{2(1 - \gamma)}$.
\end{proposition}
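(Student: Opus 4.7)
The plan is to piggyback on Proposition~\ref{prop:monotonereg} for the $\sqrt{T}$ summand and then extract the extra $\sqrt{T C_\gamma}$ term by exploiting the strategic buyer's optimality condition together with the convexity of $(p_t)$. The key additional ingredient, compared to the truthful analysis, is a pointwise lower bound on $v - p_{\kappa^*}$ in terms of the local price drop $\delta_k := p_k - p_{k+1}$.

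First I would derive a strategic acceptance gap. Since the buyer chooses $\kappa^*$ to maximize the discounted surplus $\gamma^{k-1}(v - p_k)$ (the ``flat tail'' after acceptance enters as a common factor and can be absorbed for $T$ large, handling the small correction $\gamma^{T-\kappa^*}$ routinely), comparing $k = \kappa^*$ with $k = \kappa^* + 1$ gives $\gamma^{\kappa^*-1}(v - p_{\kappa^*}) \geq \gamma^{\kappa^*}(v - p_{\kappa^*+1})$, which rearranges to
\begin{equation*}
v - p_{\kappa^*} \;\geq\; \frac{\gamma}{1-\gamma}\,\delta_{\kappa^*} \;=\; 2\,C_\gamma\,\delta_{\kappa^*}.
\end{equation*}
Plugging this into the decomposition used in Proposition~\ref{prop:monotonereg} yields $\R(\A_m,v) \geq \kappa^* v + 2 C_\gamma(T-\kappa^*)\delta_{\kappa^*}$, so the strategic structure converts the local price slope into regret accumulated over all post-acceptance rounds.

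Next I would split into two regimes exactly as in Proposition~\ref{prop:monotonereg}. If there is some $v_0 \in [1/2,1]$ with $\kappa^*(v_0) \geq \sqrt{T C_\gamma}$, the first term alone delivers $\R(\A_m,v_0) \geq \tfrac12\sqrt{T C_\gamma}$. Otherwise every $v \in [1/2,1]$ accepts by time $K := \lceil\sqrt{T C_\gamma}\rceil$, which forces $p_{K+1} \leq 1/2$, i.e.\ $\sum_{k=1}^K \delta_k \geq 1/2$. Convexity makes $\delta_k$ non-increasing, so by an Abel/Chebyshev-type averaging, a positive-measure set of valuations in $[1/2,1]$ have acceptance time $\kappa^*$ with $\delta_{\kappa^*} = \Omega(1/K) = \Omega(1/\sqrt{T C_\gamma})$. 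For any such $v$, the second term contributes $2 C_\gamma (T - K) \cdot \Omega(1/\sqrt{T C_\gamma}) = \Omega(\sqrt{T C_\gamma})$, picking some concrete $v_0$ in that set. Adding the $\Omega(\sqrt T)$ bound from Proposition~\ref{prop:monotonereg} gives the claim.

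The main obstacle is step three: turning the global constraint $\sum_{k \leq K}\delta_k \geq 1/2$ (together with the convexity $\delta_1 \geq \delta_2 \geq \cdots$) into a bound on $\delta_{\kappa^*(v)}$ that holds for a $v$ whose existence we can actually certify. The temptation is to use $\delta_K \leq 1/K$, which goes the wrong way; the right move is to observe that if all $\delta_k$ with $k$ in the range of $\kappa^*$ were much smaller than $1/K$, convexity would force the earlier $\delta_k$'s (also in that range) to absorb almost all of the mass $\geq 1/2$, contradicting their being non-increasing and non-negligible. Everything else — the strategic gap, the regret decomposition, and the final combination with Proposition~\ref{prop:monotonereg} — is short and formulaic.
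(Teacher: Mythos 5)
Your overall architecture matches the paper's (Proposition~\ref{prop:gammalower} in the appendix): a strategic acceptance condition $v - p_{\kappa^*} \gtrsim C_\gamma\,\delta_{\kappa^*}$ (the paper's Lemma~\ref{lemma:acceptance}, including the $\gamma^{T-\kappa^*+1}$ correction you wave at, which is why the paper needs $\kappa^* \leq 1 + \sqrt{T_\gamma T}$ and a hypothesis $T \geq T_\gamma + \frac{2\log(2/\gamma)}{\log(1/\gamma)}$, plus a separate easy case when the acceptance time exceeds $1+\sqrt{T_\gamma T}$), followed by convexity to lower-bound a price increment and a balancing of the two regret terms. But the step you yourself flag as the main obstacle is genuinely incomplete, and the resolution you sketch is wrong in a specific way: there is no contradiction to be had. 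With $\kappa_{\min} = \kappa^*(1)$ and $\kappa_{\max} = \kappa^*(1/2)$, convexity only converts the mass \emph{inside the acceptance window} into a bound on $\delta_{\kappa_{\min}}$, namely $\delta_{\kappa_{\min}} \geq (p_{\kappa_{\min}} - p_{\kappa_{\max}})/\kappa_{\max} \geq (p_{\kappa_{\min}} - \tfrac12)/\kappa_{\max}$. Nothing prevents a convex decreasing sequence from spending essentially all of its total drop $\geq 1/2$ on rounds $k < \kappa_{\min}$ (e.g., $\delta_1$ large, then rapidly decaying increments), in which case every $\delta_k$ in the range of $\kappa^*$ is tiny and your averaging argument produces nothing. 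That configuration is not contradictory --- it is simply a different regime, in which $p_{\kappa_{\min}}$ is close to $1/2$ and the buyer with $v_0 = 1$ pays roughly half her value every round, so $\R(\A_m, 1) = \Omega(T)$ directly. The paper handles exactly this by assuming $p_{\kappa_{\min}} \geq 1 - \sqrt{C_\gamma/T}$ (else $\R(\A_m,1) \geq (T-1)\sqrt{C_\gamma/T}$, which already implies the claim) and only then running the convexity/balancing argument.

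Two smaller points. First, you do not need a positive-measure set of valuations: the natural witness is $v_0 = 1$, whose acceptance time $\kappa_{\min}$ sits at the left end of the window where, by monotonicity of the increments, $\delta_{\kappa_{\min}}$ is the largest one available, and this is what the paper exploits. Second, fixing $K = \lceil\sqrt{TC_\gamma}\rceil$ in advance is workable but slightly clumsier than the paper's route, which keeps $\kappa_{\max}$ free, lower-bounds $\max_v \R(\A_m,v) \geq \max\bigl(\kappa_{\max},\, C_\gamma (T-\kappa_{\min})(p_{\kappa_{\min}} - \tfrac12)/\kappa_{\max}\bigr)$, and observes this is minimized at $\kappa_{\max} = \sqrt{C_\gamma (T-\kappa_{\min})(p_{\kappa_{\min}} - \tfrac12)}$. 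If you add the missing case split on $p_{\kappa_{\min}}$ and replace the ``contradiction'' by the explicit $\Omega(T)$ fallback, your argument closes and is essentially the paper's proof.
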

The full proof of this proposition is given in
Appendix~\ref{sec:monotonelower}. The proposition shows that when the
discount factor $\gamma$ is known, the \texttt{monotone} algorithm
is in fact asymptotically optimal in its class.

The results just presented suggest that the dependency on $T$ cannot
be improved by any monotone algorithm. In some sense, this family of
algorithms is ``too conservative''. Thus, to achieve a more favorable
regret guarantee, an entirely different algorithmic idea must be
introduced. In the next section, we describe a new algorithm
that achieves a substantially more advantageous
strategic regret by combining the fast convergence properties of a binary search-type
algorithm (in a truthful setting) with a method penalizing
untruthful behaviors of the buyer.

\section{A nearly optimal algorithm}
\label{sec:optimal}

Let $ \A$ be an algorithm for revenue optimization used against a
truthful buyer. Denote by $\cT(T)$ the tree associated to $\A$
after $T$ rounds. That is, $\cT(T)$ is a full tree of height $T$ with
nodes $\n \in \cT(T)$ labeled with the prices $p_\n$ offered by $
\A$.  The right and left children of $\n$ are denoted by $r(\n)$ and
$l(\n)$ respectively. The price offered when $p_\n$ is accepted by the
buyer is the label of $r(\n)$ while the price offered by $ \A$ if
$p_\n$ is rejected is the label of $l(\n)$. Finally, we will denote
the left and right subtrees rooted at node $\n$ by $\mathscr{L}(\n)$
and $\mathscr{R}(\n)$ respectively. Figure~\ref{fig:Kleintree} depicts
the tree generated by an algorithm proposed by
\citet{KleinbergLeighton}, which we will describe later.

Since the buyer holds a fixed valuation, we will consider algorithms that
increase prices only after a price is accepted and decrease it
only after a rejection. This is formalized in the following
definition.

\begin{definition}
An algorithm $ \A$ is said to be \emph{consistent} if $\max_{\n'
\in \mathscr{L}(\n)} p_{\n'} \leq p_n \leq \min_{\n' \in
\mathscr{R}(\n)} p_{\n'}$ for any node $\n \in \cT(T)$.
\end{definition}
For any consistent algorithm $ \A$, we define a modified algorithm
$\A_r$, parametrized by an integer $r \geq 1$, designed to face
strategic buyers. Algorithm $\A_r$ offers the same prices as $ \A$,
but it is defined with the following modification: when a price is
rejected by the buyer, the seller offers the same price for $r$
rounds. The pseudocode of $\A_r$ is given in
Algorithm~\ref{alg:reduction}. The motivation behind the modified
algorithm is given by the following simple observation: a strategic
buyer will lie only if she is certain that rejecting a price will
boost her surplus in the future.  By forcing the buyer to reject a
price for several rounds, the seller ensures that the future
discounted surplus will be negligible, thereby coercing the buyer
to be truthful.

\begin{figure}[t]
\centering
\begin{tabular}{cc}
\includegraphics[scale=.35]{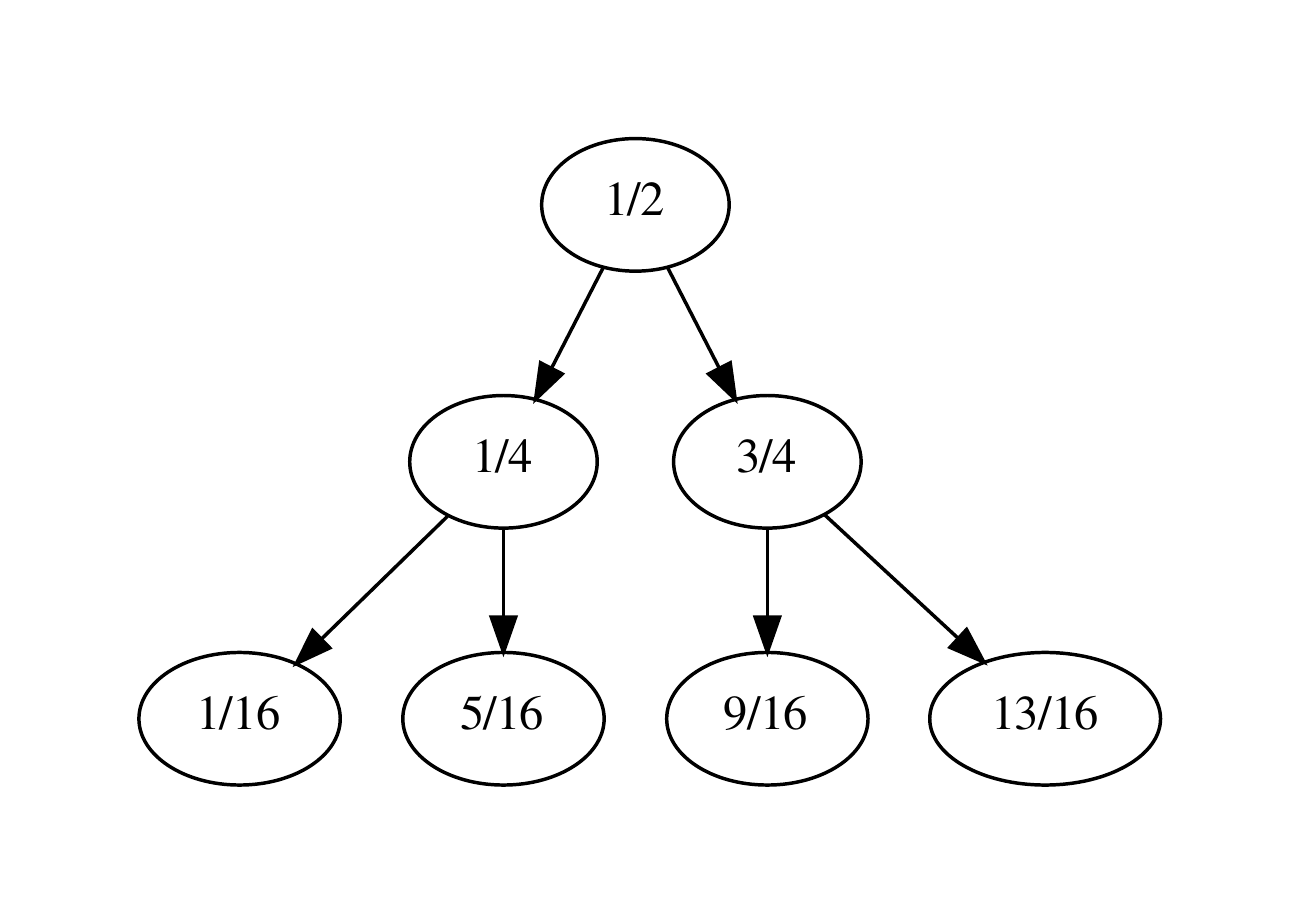} &
\includegraphics[scale=.35]{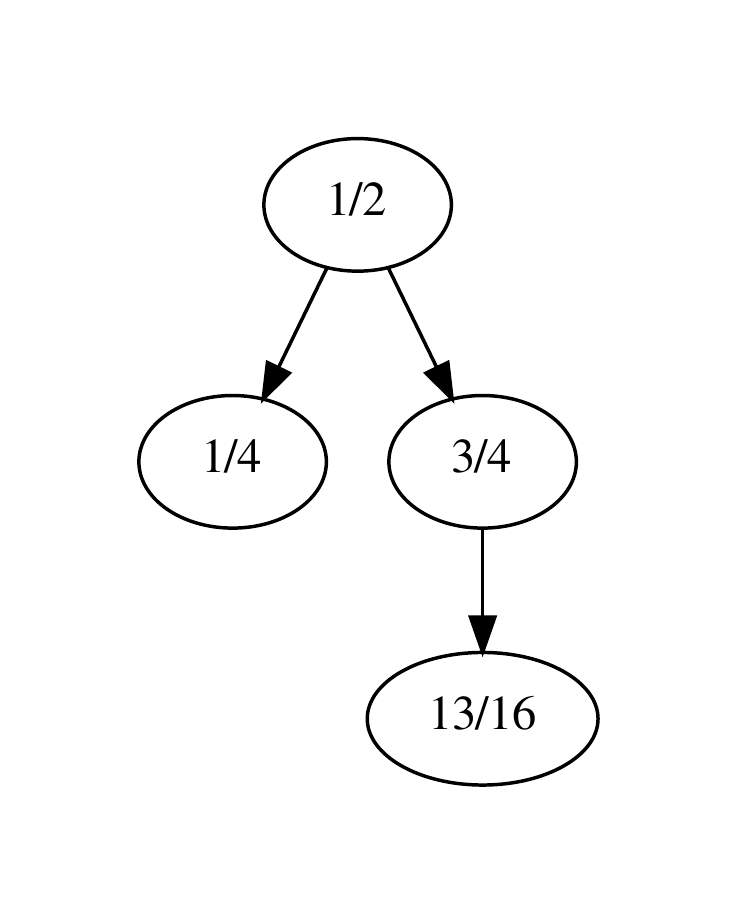}\\
(a) &  (b)
\end{tabular}
\caption{\small{(a) Tree $\cT(3)$ associated to the algorithm proposed in
\citep{KleinbergLeighton}. (b) Modified tree $\cT'(3)$ with $r = 2$.}}
\label{fig:Kleintree}
\end{figure}

We proceed to formally analyze algorithm $\A_r$. In particular, we
will quantify the effect of the parameter $r$ on the choice of the
buyer's strategy. To do so, a measure of the spread of the prices
offered by $\A_r$ is needed.

\begin{definition}
For any node $\n \in \cT(T)$ define the right increment of
$\n$ as $\delta_\n^r := p_{r(\n)} - p_\n$.  Similarly, define its left
increment to be $\delta_\n^l :=  \max_{\n' \in \mathscr{L}(\n)} p_{\n} - p_{\n'}$.
\end{definition}
The prices offered by $\A_r$ define a path in $\cT(T)$. For each node
in this path, we can define time $t(\n)$ to be the number of rounds
needed for this node to be reached by $\A_r$. Note that, since $r$ may
be greater than $1$, the path chosen by $\A_r$ might not necessarily
reach the leaves of $\cT(T)$.  Finally, let $\mathcal{S} \colon \n \mapsto
\mathcal{S}(\n)$ be the function representing the surplus obtained by
the buyer when playing an optimal strategy against $\A_r$ after node
$\n$ is reached.
 
\begin{lemma}
\label{lemma:recursion}
The function $\mathcal{S}$ satisfies the following recursive
relation:
  \begin{equation}
    \label{eq:recursion}
    \mathcal{S}(\n) = \max(\gamma^{t(\n) - 1}(v - p_\n)  + \mathcal{S}(r(\n)), \mathcal{S}(l(\n))).
  \end{equation}
\end{lemma}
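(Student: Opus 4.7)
The plan is to prove the recursion by backward induction on the tree $\cT(T)$, working from the deepest reachable nodes up toward the root. The base case will handle nodes $\n$ with $t(\n) > T$ (or nodes that cannot be reached within the horizon): for these, no further surplus can accrue, so $\mathcal{S}(\n) = 0$ by convention, and the recursion holds vacuously since we set both branches of the max to $0$.

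For the inductive step, I would fix a node $\n$ and assume the recursion holds for both $r(\n)$ and $l(\n)$. At $\n$, the buyer is first offered $p_\n$ at round $t(\n)$, and her optimal play must fall into one of two cases: accept this first offer, or reject it. If she accepts, she collects immediate discounted surplus $\gamma^{t(\n)-1}(v - p_\n)$ and the algorithm transitions to $r(\n)$ at time $t(r(\n)) = t(\n)+1$, so by the inductive hypothesis her continuation payoff is $\mathcal{S}(r(\n))$, giving a total of $\gamma^{t(\n)-1}(v - p_\n) + \mathcal{S}(r(\n))$. If she rejects, then by the definition of $\A_r$ the seller commits to offering $p_\n$ for $r$ rounds before moving on to $l(\n)$ at time $t(l(\n)) = t(\n) + r$, after which the inductive hypothesis yields continuation payoff $\mathcal{S}(l(\n))$. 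Since the buyer picks whichever branch is larger, we obtain exactly \eqref{eq:recursion}.

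The only delicate point — and the step I expect to take the most care — is ruling out the hybrid strategies where the buyer first rejects $p_\n$ but then accepts it during the $r$-round repetition window. The key observation is that any such strategy is dominated by accepting immediately: the transition to $r(\n)$ is triggered in both cases, the continuation payoff $\mathcal{S}(r(\n))$ is unchanged, but the term $\gamma^{t}(v - p_\n)$ is strictly larger at the earliest $t$ because $\gamma < 1$. Hence a rational buyer who is ever going to accept $p_\n$ accepts at its first appearance, and any strategy that rejects the first offer can be assumed, without loss of optimality, to reject throughout the entire $r$-round window. This collapses the two-branch analysis above and completes the induction.
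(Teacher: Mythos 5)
Your argument is essentially the paper's proof made explicit: the paper assigns weight $\gamma^{t(\n)-1}(v-p_\n)$ to each accept-edge and $0$ to each reject-edge of the reachable subtree, and observes that $\mathcal{S}(\n)$ is the longest-path value from $\n$, which is precisely the Bellman recursion you obtain by backward induction. Your extra step about hybrid strategies (reject the first offer of $p_\n$, then accept during the $r$-round penalty window) addresses something the paper passes over silently, but it contains a small imprecision: if the buyer accepts at the $j$-th repetition, the node $r(\n)$ is reached $j$ rounds later than $t(\n)+1$, so the continuation payoff is not \emph{unchanged} but further discounted; since an optimal buyer can always secure a non-negative continuation surplus, this only strengthens the domination by immediate acceptance, and your conclusion stands.
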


\begin{proof}
Define a weighted tree $\cT'(T) \subset \cT(T)$ of nodes reachable by
algorithm $\A_r$. We assign weights to the edges in the
following way: if an edge on $\cT'(T)$ is of the form $(\n, r(\n))$,
its weight is set to be $\gamma^{t(\n) - 1} (v - p_\n)$, otherwise, it
is set to $0$. It is easy to see that the function $\mathcal{S}$
evaluates the weight of the longest path from node $\n$ to the
leafs of $\cT'(T)$. It thus follows from elementary graph algorithms
that equation \eqref{eq:recursion} holds.
\end{proof} 
The previous lemma immediately gives us necessary conditions for a
buyer to reject a price.

\begin{proposition}
\label{prop:rejcond}
For any reachable node $\n$, if price $p_\n$ is rejected by the buyer,
then the following inequality holds:
\begin{equation*}
v - p_\n < \frac{\gamma^{r}}{(1 - \gamma)(1 - \gamma^r)}(\delta_\n^l + \gamma \delta_\n^r).
\end{equation*}
\end{proposition}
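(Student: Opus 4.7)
The plan is to use the recursive characterization of $\mathcal{S}$ in Lemma~\ref{lemma:recursion}. The buyer's choice at $\n$ is between accepting, worth $\gamma^{t(\n)-1}(v-p_\n) + \mathcal{S}(r(\n))$, and rejecting, worth $\mathcal{S}(l(\n))$. Rejection therefore forces
\begin{equation*}
\mathcal{S}(l(\n)) \geq \gamma^{t(\n)-1}(v-p_\n) + \mathcal{S}(r(\n)),
\end{equation*}
and the goal is to convert this into the stated upper bound on $v-p_\n$ by bounding the left-hand side above and $\mathcal{S}(r(\n))$ below, both in terms of the spreads $\delta_\n^l$ and $\delta_\n^r$.

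For the upper bound on $\mathcal{S}(l(\n))$, I would exploit that every $p_{\n'}$ with $\n' \in \mathscr{L}(\n)$ is at least $p_\n - \delta_\n^l$, so the buyer's per-round surplus from any accepted price in the left subtree is at most $v - p_\n + \delta_\n^l$. Combined with the fact that $l(\n)$ is first reached at time $t(\n)+r$ (because of the enforced $r$-round delay after a rejection), a straightforward geometric-series bound gives $\mathcal{S}(l(\n)) \leq \frac{\gamma^{t(\n)+r-1}}{1-\gamma}(v-p_\n+\delta_\n^l)$. To sharpen this so that the denominator becomes $(1-\gamma)(1-\gamma^r)$ rather than $1-\gamma$, I would unfold the recursion for $\mathcal{S}$ along the leftmost path $l(\n), ll(\n), lll(\n), \ldots$ of $\mathscr{L}(\n)$: each additional rejection there incurs a further $\gamma^r$ discount while the per-round price bound remains the same, so the successive contributions form a geometric series in $\gamma^r$ whose sum introduces the extra $(1-\gamma^r)^{-1}$ factor.

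For the lower bound on $\mathcal{S}(r(\n))$, the simplest available buyer strategy is to accept $p_{r(\n)} = p_\n + \delta_\n^r$ at time $t(\n)+1$ and then reject everything afterwards, which yields $\mathcal{S}(r(\n)) \geq \gamma^{t(\n)}(v-p_\n-\delta_\n^r)$. This remains a valid (though possibly vacuous) lower bound when the right-hand side is negative, since $\mathcal{S}(r(\n)) \geq 0$ holds unconditionally.

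Substituting both bounds into the rejection inequality, dividing through by $\gamma^{t(\n)-1}$, collecting the $(v-p_\n)$ terms on one side, and rearranging then yields an inequality of the claimed form; the strict inequality follows from the strict preference of rejection over acceptance under Lemma~\ref{lemma:recursion}'s tie-breaking. The main obstacle I foresee is precisely the sharpening described above: the naive per-round ceiling produces a denominator of order $1-\gamma-\gamma^r$, and obtaining the cleaner $(1-\gamma)(1-\gamma^r)$ factor requires the more careful unfolding along the left subtree of $\n$, since only that refinement captures the fact that deferred acceptances in $\mathscr{L}(\n)$ are spaced by exactly $r$ rounds each.
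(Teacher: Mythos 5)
Your skeleton (rejection condition from Lemma~\ref{lemma:recursion}, upper bound on $\mathcal{S}(l(\n))$, lower bound on $\mathcal{S}(r(\n))$, rearrange) matches the paper's, and your bound $\mathcal{S}(l(\n)) \leq \frac{\gamma^{t(\n)+r-1}}{1-\gamma}(v-p_\n+\delta_\n^l)$ is exactly the one used there. But the step you flag as the ``main obstacle'' is where the argument genuinely breaks. With your single-round lower bound $\mathcal{S}(r(\n)) \geq \gamma^{t(\n)}(v-p_\n-\delta_\n^r)$, collecting the $(v-p_\n)$ terms leaves a coefficient $1+\gamma-\frac{\gamma^r}{1-\gamma}$, which is negative whenever $\gamma$ is close to $1$, so you cannot divide to conclude. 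Your proposed repair --- unfolding the recursion along the leftmost path of $\mathscr{L}(\n)$ to pick up a geometric series in $\gamma^r$ --- cannot supply the missing $(1-\gamma^r)$: the bound $\frac{\gamma^{t(\n)+r-1}}{1-\gamma}(v-p_\n+\delta_\n^l)$ already dominates \emph{every} strategy in $\mathscr{L}(\n)$ (it charges an acceptance at the ceiling per-round surplus at every round from $t(\n)+r$ to $T$), so no refinement of the left-subtree analysis can shrink it further; and summing successive contributions in $\gamma^r$ would multiply it by $(1-\gamma^r)^{-1}\geq 1$, i.e.\ loosen rather than sharpen it.

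The $(1-\gamma^r)$ in the denominator of the claim is not a sharpening of the left-subtree bound at all: it is the residue of a near-cancellation, and the cancelling term must come from the \emph{right} subtree. The paper lower-bounds $\mathcal{S}(r(\n))$ by the surplus of the strategy that rejects $p_{r(\n)}$ once and accepts every price thereafter; by consistency those prices are all at most $p_{r(\n)}=p_\n+\delta_\n^r$, giving the full geometric tail
\begin{equation*}
\mathcal{S}(r(\n)) \;\geq\; \frac{\gamma^{t(\n)+r}-\gamma^{T}}{1-\gamma}\,(v-p_\n-\delta_\n^r),
\end{equation*}
whose $(v-p_\n)$-coefficient $\frac{\gamma^{r+1}}{1-\gamma}$ almost exactly offsets the $\frac{\gamma^{r}}{1-\gamma}$ coming from the upper bound on $\mathcal{S}(l(\n))$: one gets $1+\frac{\gamma^{r+1}-\gamma^{r}}{1-\gamma}=1-\gamma^r>0$, and rearranging yields the stated inequality (the $\gamma^T$ terms only help, since they enter with the sign of $-(\delta_\n^l+\delta_\n^r)$). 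So the missing idea is a stronger, tail-sum lower bound on $\mathcal{S}(r(\n))$, not a stronger upper bound on $\mathcal{S}(l(\n))$; as written, your plan would stall at the rearrangement step.
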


\begin{proof}
A direct implication of Lemma~\ref{lemma:recursion} is that price
$p_{\n}$ will be rejected by the buyer if and only if
\begin{equation}
\label{eq:condition}
  \gamma^{t(\n) - 1}(v - p_\n) + \mathcal{S}(r(\n)) < \mathcal{S}(l(\n)).
\end{equation}
However, by definition, the buyer's surplus obtained by following any
path in $\mathscr{R}(\n)$ is bounded above by $\mathcal{S}(r(\n))$. In
particular, this is true for the path which rejects $p_{r(\n)}$ and
accepts every price afterwards. The surplus of this path is given by
$\sum_{t = t(\n) + r + 1}^T\gamma^{t-1} (v - \widehat p_t)$ where
$(\widehat p_t)_{t = t(\n) + r + 1}^T$ are the prices the seller would
offer if price $p_{r(\n)}$ were rejected. Furthermore, since algorithm
$\A_r$ is consistent, we must have $\widehat p_t \leq p_{r(\n)} = p_\n
+ \delta_\n^r$. Therefore, $\mathcal{S}(r(\n))$ can be bounded
as follows:
\begin{equation}
\label{eq:lowerboundR}
  \mathcal{S}(r(\n)) \geq \sum_{t = t(\n) + r + 1}^T \gamma^{t-1} (v - p_\n - \delta_\n^r) = \frac{\gamma^{t(\n) + r} - \gamma^T}{1 - \gamma}(v - p_\n - \delta_\n^r).
\end{equation}
We proceed to upper bound $\mathcal{S}(l(\n))$. Since $ p_\n -
p_n' \leq \delta_\n^l$ for all $\n' \in \mathscr{L}(\n)$, $v -
p_{\n'} \leq v - p_\n + \delta_\n^l$ and
\begin{equation}
  \label{eq:upperboundL}
\mathcal{S}(l(\n)) \leq \sum_{t = t_\n + r}^T \gamma^{t-1}(v - p_\n +
\delta_\n^l) 
= \frac{\gamma^{t(\n) + r - 1} - \gamma^T}{1 - \gamma}(v - p_\n + \delta_\n^l).
\end{equation}
Combining inequalities \eqref{eq:condition}, \eqref{eq:lowerboundR} and \eqref{eq:upperboundL} we
conclude that
\begin{align*}
\gamma^{t(\n) -1}(v - p_\n) + \frac{\gamma^{t(\n) + r} - \gamma^T}{1 - \gamma}(v - p_\n - \delta_\n^r) &\leq
 \frac{\gamma^{t(\n) + r - 1} - \gamma^T}{1 - \gamma}(v - p_\n + \delta_\n^l) \\
\Rightarrow \quad (v - p_\n)\left(1 + \frac{\gamma^{r + 1} - \gamma^r}{1 - \gamma} \right) &\leq
 \frac{\gamma^{r} \delta_\n^l + \gamma^{r + 1} \delta_\n^r - \gamma^{T - t(\n) + 1}(\delta_\n^r + \delta_\n^l)}{1 - \gamma} \\
\Rightarrow  \quad (v - p_n)(1 - \gamma^r) &\leq \frac{\gamma^r(\delta_\n^l + \gamma \delta_\n^r)}{1 - \gamma}.
\end{align*}
Rearranging the terms in the above inequality yields the desired result.
\end{proof}

Let us consider the following instantiation of algorithm $ \A$
introduced in \citep{KleinbergLeighton}.  The algorithm keeps track of
a \emph{feasible interval} $[a, b]$ initialized to $[0, 1]$ and an
increment parameter $\epsilon$ initialized to $ 1/2$. The algorithm
works in phases. Within each phase, it offers prices $a + \epsilon, a
+ 2 \epsilon, \ldots$ until a price is rejected. If price $a + k
\epsilon$ is rejected, then a new phase starts with the feasible
interval set to $[a + (k - 1) \epsilon, a + k \epsilon]$ and the
increment parameter set to $\epsilon^2$. This process continues until
$b - a < 1/T$ at which point the last phase starts and price $a$ is
offered for the remaining rounds.  It is not hard to see that the
number of phases needed by the algorithm is less than $\lceil \log_2
\log_2 T \rceil + 1$. A more surprising fact is that this algorithm
has been shown to achieve regret $O(\log \log T)$ when the seller
faces a truthful buyer. We will show that the modification $\A_r$ of
this algorithm admits a particularly favorable regret bound. We will
call this algorithm $\mathsf{PFS}_r$ (penalized fast search algorithm).

\begin{proposition}
\label{prop:regret}
For any value of $v \in [0, 1]$ and any $\gamma \in (0, 1)$, the
regret of algorithm $\mathsf{PFS}_r$ admits the following upper bound:
\begin{equation}
\label{eq:reggammabound}
  \R(\mathsf{PFS}_r, v) \leq (v r + 1)   (\lceil \log_2 \log_2 T \rceil +
    1) +  \frac{(1 + \gamma)\gamma^r T}{2(1 - \gamma)(1 - \gamma^r)}.
\end{equation}
\end{proposition}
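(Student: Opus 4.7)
The plan is to decompose the execution of $\mathsf{PFS}_r$ into a \emph{search} portion consisting of the at most $K := \lceil \log_2 \log_2 T \rceil + 1$ phases of the underlying Kleinberg--Leighton algorithm (together with the $r$ penalty rounds attached to each in-phase rejection) and an \emph{exploitation} portion in which a single price $p^*$ is offered for all remaining rounds. Recall that in phase $k$ the underlying algorithm offers an arithmetic progression of step $\epsilon_k = \epsilon_{k-1}^2$ of length at most $N_k \le \epsilon_{k-1}/\epsilon_k$ before a single rejection triggers the transition to phase $k+1$.

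For the search portion I aim to show that each phase contributes at most $vr + 1$ to the regret. The in-phase rejection and its $r$ penalty rounds contribute at most $v(r+1) \le vr + 1$ since $v \le 1$. For the accepted prices in phase $k$, Proposition~\ref{prop:rejcond} applied to the rejected price $p_{N_k}$ gives $v - p_{N_k} \le (1+\gamma)\gamma^r \epsilon_k / [(1-\gamma)(1-\gamma^r)]$, because the consistent Kleinberg--Leighton structure forces $\delta_{N_k}^l, \delta_{N_k}^r \le \epsilon_k$. Writing $v - p_i = (v - p_{N_k}) + (p_{N_k} - p_i)$ and summing the arithmetic progression, the identity $N_k^2 \epsilon_k \le 1$ (which follows from $N_k \epsilon_k \le \epsilon_{k-1}$ together with $\epsilon_k = \epsilon_{k-1}^2$) shows that the dominant arithmetic contribution is at most $1/2$ per phase, while the strategic residuals sum across phases to an additive constant absorbable into the exploitation term. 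Summing over the $K$ phases yields $(vr + 1) K$.

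For the exploitation portion, once $p^*$ is offered repeatedly the buyer has no future gain from a rejection, so Proposition~\ref{prop:rejcond} forces her to accept whenever $v \ge p^*$; the contribution therefore reduces to at most $T(v - p^*)$. Applying Proposition~\ref{prop:rejcond} to the rejected node $\n$ preceding the exploitation chain and using the worst-case bounds $\delta_\n^l, \delta_\n^r \le 1/2$ (inherited from prices lying in $[0,1]$) yields $v - p^* \le (1+\gamma)\gamma^r / [2(1-\gamma)(1-\gamma^r)]$, and multiplying by $T$ produces the second term of the stated bound.

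The principal obstacle is the exploitation analysis in the presence of strategic lies: a buyer who rejects a price $p \le v$ during the search could push the feasible interval above $v$ and thereby inflate $v - p^*$. Proposition~\ref{prop:rejcond} is precisely the tool that controls this, since every lie requires $v - p$ to already be of order $\gamma^r / (1-\gamma)^2$; the cumulative displacement of the feasible interval across the $K$ phases is therefore of the same order and does not compound with $T$. Verifying that this displacement is absorbed uniformly, and that the per-phase accepted-price slack does not grow with $T$, is the technical crux of the argument.
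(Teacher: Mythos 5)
Your decomposition is essentially the paper's own proof: rejections cost $v$ per round for $O(r)$ rounds in each of the $K = \lceil \log_2 \log_2 T\rceil + 1$ phases, the accepted prices within a phase contribute $O(1)$ to the regret via $N_i \sqrt{\epsilon_i} \le 1$, and the displacement $v - b_i$ caused by strategic rejections is controlled through Proposition~\ref{prop:rejcond} and multiplied by the number of rounds, at most $T$. The only substantive difference is bookkeeping: the paper applies Proposition~\ref{prop:rejcond} with the crude bounds $\delta^l_\n, \delta^r_\n \le 1/2$ uniformly and uses $\sum_i N_i \le T$ over all phases at once rather than isolating a final exploitation phase, which is what produces the exact constants of \eqref{eq:reggammabound}; your refined per-phase estimates $\delta \le \epsilon_k$ are valid but leave small additive residuals (and a $v(r+1)$ versus $vr$ count for penalty rounds) that give the same asymptotics with slightly looser constants.
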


Note that for $r = 1$ and $\gamma \rightarrow 0$ the
upper bound coincides with that of \citep{KleinbergLeighton}.

\begin{proof} 
Algorithm $\mathsf{PFS}_r$ can accumulate regret in two ways: the price offered 
$p_\n$ is rejected, in which case the regret is $v$, or the price is
accepted and its regret is $v - p_\n$.

Let $K = \lceil \log_2 \log_2 T \rceil + 1 $ be the number of phases
run by algorithm $\mathsf{PFS}_r$. Since at most $K$ different prices are
rejected by the buyer (one rejection per phase) and each price must be
rejected for $r$ rounds, the cumulative regret of all rejections is
upper bounded by $v K r$.

The second type of regret can also be bounded straightforwardly. For
any phase $i$, let $\epsilon_i$ and $[a_i, b_i]$ denote the
corresponding search parameter and feasible interval respectively. If
$v \in [a_i, b_i]$, the regret accrued in the case where the buyer accepts a
price in this interval is bounded by $b_i - a_i =
\sqrt{\epsilon_i}$. If, on the other hand $v \geq b_i$, then it
readily follows that $v - p_\n < v - b_i + \sqrt{\epsilon_i}$ for all
prices $p_\n$ offered in phase $i$. Therefore, the regret obtained in
acceptance rounds is bounded by
\begin{equation*}
\sum_{i=1}^K N_i \Big((v - b_i) \Ind_{v > b_i} + \sqrt{\epsilon_i} \Big) 
\leq \sum_{i=1}^K (v - b_i) \Ind_{v >  b_i} N_i+ K,
\end{equation*}
where $N_i \leq
\frac{1}{\sqrt{\epsilon_i}}$ denotes the number of prices offered during the $i$-th
round.

Finally, notice that, in view of the algorithm's definition, every
$b_i$ corresponds to a rejected price. Thus, by
Proposition~\ref{prop:rejcond}, there exist nodes $\n_i$ (not
necessarily distinct) such that $p_{\n_i} = b_i$ and
\begin{equation*}
  v - b_i = v - p_{\n_i} \leq \frac{\gamma^r}{(1 - \gamma) (1 -
    \gamma^r)} (\delta_{\n_i}^l + \gamma \delta_{n_i}^r).
\end{equation*}
It is immediate that $\delta_{\n}^r \leq 1/2$ and $\delta_{\n}^l \leq 1/2$
for any node $\n$, thus, we can write
\begin{align*}
  \sum_{i=1}^K (v - b_i) \Ind_{v > b_i} N_i
& \leq  \frac{\gamma^r(1 +  \gamma) }{2(1  - \gamma) (1 - \gamma^r)}
  \sum_{i=1}^K N_i
\leq \frac{\gamma^r(1 +
    \gamma) }{2 (1  - \gamma) (1 - \gamma^r)} T.
\end{align*}
The last inequality holds since at most $T$ prices are offered by our
algorithm. Combining the bounds for both regret types yields the
result.
\end{proof} 

When an upper bound on the discount factor $\gamma$ is known to the
seller, he can leverage this information and optimize upper bound
\eqref{eq:reggammabound} with respect to the parameter $r$.

\begin{theorem}
\label{th:optimregret} 
Let $1/2 < \gamma < \gamma_0 < 1$ and $r^* = \Big\lceil \argmin_{r \geq
1} r + \frac{\gamma_0^r T}{(1 - \gamma_0) (1 - \gamma_0^r)} \Big\rceil.$
For any $v \in [0,1]$, if $T > 4$, the regret of $\mathsf{PFS}_{r^*}$ satisfies
\begin{equation*}
  \R(\mathsf{PFS}_{r^*}, v) \leq (2 v \gamma_0 T_{\gamma_0}\log c T  + 1
    + v )(\log_2 \log_2 T + 1) + 4 T_{\gamma_0},
\end{equation*}
where $c = 4 \log 2$.
\end{theorem}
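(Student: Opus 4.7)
First, I would invoke Proposition~\ref{prop:regret} for $\mathsf{PFS}_{r^*}$. The factor $(1+\gamma)\gamma^r/((1-\gamma)(1-\gamma^r))$ is increasing in $\gamma$ on $(0,1)$ (each of the positive factors $1+\gamma$, $\gamma^r$, $1/(1-\gamma)$ and $1/(1-\gamma^r)$ is), so the assumption $\gamma<\gamma_0$ lets one replace $\gamma$ by $\gamma_0$ in the bound. Using also $(1+\gamma_0)/2\le 1$, this reduces the right-hand side to $(vr^*+1)K+g(r^*)$, where $g(r):=\gamma_0^r T/((1-\gamma_0)(1-\gamma_0^r))$ and $K:=\lceil\log_2\log_2 T\rceil+1$. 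The key step is then to control $r^*$ and $g(r^*)$ separately, via the first-order optimality condition.

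Letting $\tilde r^*$ denote the real-valued argmin of $f(r):=r+g(r)$, so that $r^*=\lceil\tilde r^*\rceil$, and setting $\tilde u:=\gamma_0^{-\tilde r^*}$, a direct computation shows that the condition $f'(\tilde r^*)=0$ is equivalent to the identity $(1-\gamma_0)(\tilde u-1)^2=T\tilde u\log(1/\gamma_0)$. I would then exploit the two-sided inequality $1\le \log(1/\gamma_0)/(1-\gamma_0)\le 2\log 2$ valid on $[1/2,1]$, the upper bound being attained at $\gamma_0=1/2$ and giving rise to the constant $c=4\log 2$. The lower side yields $(\tilde u-1)^2\ge T\tilde u$; solving the resulting quadratic $\tilde u^2-(T+2)\tilde u+1\ge 0$ under $\tilde u>1$ gives $\tilde u\ge T+1$, whence $g(\tilde r^*)=T/((1-\gamma_0)(\tilde u-1))\le T_{\gamma_0}$. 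The upper side yields $(\tilde u-1)^2\le(c/2)T\tilde u$, whence $\tilde u\le cT$ for $T>4$, and therefore $\tilde r^*=\log\tilde u/\log(1/\gamma_0)\le\log(cT)/(1-\gamma_0)=T_{\gamma_0}\log(cT)$.

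To conclude, because $g$ is decreasing and $r^*\ge\tilde r^*$, one has $g(r^*)\le g(\tilde r^*)\le T_{\gamma_0}\le 4T_{\gamma_0}$; and because $\gamma_0>1/2$ gives $2\gamma_0>1$, one has $vr^*+1\le v(\tilde r^*+1)+1\le vT_{\gamma_0}\log(cT)+v+1\le 2v\gamma_0 T_{\gamma_0}\log(cT)+v+1$. Plugging both estimates into the bound from the first paragraph yields the statement of the theorem.

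The main obstacle is the separate control of $r^*$ and $g(r^*)$: a naive use of the single optimality inequality $r^*,\,g(r^*)\le f(r^*)\le f(\bar r)$ for some candidate $\bar r\approx T_{\gamma_0}\log(cT)$ conflates the two contributions and introduces a spurious factor of $K$ on the additive $O(T_{\gamma_0})$ term, degrading the bound from the desired $4T_{\gamma_0}$ to something like $KT_{\gamma_0}$. It is the first-order equation, through its simultaneous upper and lower bounds on $\tilde u$, that allows one to keep the additive constant of order $T_{\gamma_0}$ while the multiplicative $K$ acts only on the $T_{\gamma_0}\log(cT)$ term.
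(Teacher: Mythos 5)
Your proof is correct and follows essentially the same route as the paper's: both extract the first-order optimality condition for the real minimizer, bound $\gamma_0^{\bar r^*}$ from both sides using the two extreme values $1$ and $2\log 2$ of $\gamma \mapsto \log(1/\gamma)/(1-\gamma)$ on $(1/2,1)$, and substitute the resulting bounds on $r^*$ and on the second term back into Proposition~\ref{prop:regret}. The only difference is presentational: you manipulate the quadratic inequality in $\tilde u = \gamma_0^{-\tilde r^*}$ directly, whereas the paper solves the quadratic explicitly and establishes monotonicity of the resulting function of $\gamma_0$ via two auxiliary lemmas; your final estimates (e.g., $g(\tilde r^*) \leq T_{\gamma_0}$ via $\log(1/\gamma_0) \geq 1 - \gamma_0$) are in fact slightly tighter than the paper's but yield the same stated bound.
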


The proof of this theorem is fairly technical and is deferred to the
Appendix. The theorem helps us define conditions under
which logarithmic regret can be achieved. Indeed, if $\gamma_0 =
e^{-1/\log T}= O(1 - \frac{1}{\log T})$, using the inequality $e^{-x}
\leq 1 - x + x^2/2$ valid for all $x > 0$ we obtain
\begin{equation*}
  \frac{1}{1 - \gamma_0} \leq \frac{\log^2 T}{2 \log T - 1} \leq \log T.
\end{equation*}
It then follows from Theorem~\ref{th:optimregret} that
\begin{equation*}
  \R(\mathsf{PFS}_{r^*}, v) \leq (2 v\log T \log cT + 1
  + v) (  \log_2 \log_2 T + 1) + 4\log T. \\ 
\end{equation*}
Let us compare the regret bound given by Theorem~\ref{th:optimregret}
with the one given by \citet{KAU}. The above discussion shows that for
certain values of $\gamma$, an exponentially better regret can be
achieved by our algorithm. It can be argued that the knowledge of an
upper bound on $\gamma$ is required, whereas this is not needed for
the \texttt{monotone} algorithm. However, if $\gamma > 1 -
1/\sqrt{T}$, the regret bound on \texttt{monotone} is
super-linear, and therefore uninformative. Thus, in order to properly
compare both algorithms, we may assume that $\gamma < 1 -
1/\sqrt{T}$ in which case, by Theorem~\ref{th:optimregret},
the regret of our algorithm is $O(\sqrt{T} \log T)$ whereas only
linear regret can be guaranteed by the \texttt{monotone}
algorithm. Even under the more favorable bound of $O(\sqrt{T_\gamma T}
+ \sqrt{T})$, for any $\alpha < 1$ and $\gamma < 1 - 1/T^\alpha$, the
\texttt{monotone} algorithm will achieve regret $O(T^{\frac{\alpha +
1}{2}})$ while a strictly better regret $O(T^\alpha \log T \log \log
T)$ is attained by ours. 

\section{Lower bound}
\label{sec:lowerbound}

The following lower bounds have been derived in previous work.

\begin{theorem}[\citep{KAU}]
Let $\gamma > 0$ be fixed.  For any algorithm $\A$, there exists a
valuation $v$ for the buyer such that $\R(\A, v) \geq \frac{1}{12}
T_\gamma$.
\end{theorem}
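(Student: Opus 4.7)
The plan is a pigeonhole argument over two candidate buyer valuations $v_L < v_H$, say $v_L = 1/2$ and $v_H = 1$, together with an imitation trick: a buyer with valuation $v_H$ can always mimic the actions of a truthful buyer with valuation $v_L$. I would show that for any algorithm $\A$, at least one of these two valuations already forces regret $\Omega(T_\gamma)$, and then tune constants to reach $T_\gamma/12$.

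First, run $\A$ against a truthful $v_L$-buyer and let $(p_t^L)_{t = 1}^T$ denote the induced price sequence. Set $T_L = |\{t : p_t^L > v_L\}|$. Each such round is rejected by the truthful buyer and contributes $v_L$ to the regret, so $\R(\A, v_L) \geq v_L T_L$. If $T_L \geq T_\gamma/6$, then $\R(\A, v_L) \geq T_\gamma/12$ and the bound holds with $v = v_L$. I therefore assume $T_L < T_\gamma/6$, meaning the trajectory against truthful $v_L$ settles onto prices at most $v_L$ within the first $O(T_\gamma)$ rounds.

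Now switch to a strategic buyer with true valuation $v_H$ who plays the \emph{imitation} strategy, rejecting exactly when $p_t^L > v_L$. Because $\A$ is deterministic, the induced prices coincide with $(p_t^L)$, every accepted price is at most $v_L$, and the seller's undiscounted revenue is bounded by $v_L(T - T_L)$. Under imitation alone, regret would already be at least $T(v_H - v_L) = T/2 \gg T_\gamma$. The remaining task is to check that the $v_H$-buyer's truly optimal strategy cannot rescue the seller's revenue by more than $O(T_\gamma)$.

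This last check is the main obstacle, because the buyer optimizes \emph{discounted} surplus while the seller's regret is \emph{undiscounted}. My plan is to restrict attention to the initial block of $\Theta(T_\gamma)$ rounds, where $\gamma^{t-1}$ remains bounded away from zero so that discounted and undiscounted sums agree up to constants. Within this block I would combine the accounting identity $\sum_t \gamma^{t-1} a_t p_t = v_H \sum_t \gamma^{t-1} a_t - \Sur(\A, v_H)$ with the optimality bound $\Sur(\A, v_H) \geq (v_H - v_L)\sum_{t : p_t^L \leq v_L} \gamma^{t-1}$ coming from imitation, in order to upper bound the buyer's payments and so extract an $\Omega(T_\gamma)$ lower bound on the undiscounted regret. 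Finally, tuning $v_L$, $v_H$, and the threshold in $T_L$ would give the constant $1/12$.
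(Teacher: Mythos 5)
First, a point of comparison: the paper does not actually prove this theorem --- it is imported from \citet{KAU}, and the only accompanying text is the remark that the original proof takes the valuation distribution to be a point mass. So your attempt must be judged on its own terms, and on those terms it has two genuine gaps. The first is in your opening case: you bound $\R(\A, v_L)$ below by $v_L T_L$, where $T_L$ counts the rounds a \emph{truthful} $v_L$-buyer rejects. But strategic regret is evaluated along the trajectory generated by the buyer's \emph{optimal} (discounted-surplus-maximizing) strategy, which for the $v_L$-buyer need not be truthful and in general induces a different price sequence entirely; $v_L T_L$ lower-bounds the regret of a hypothetical truthful run, not $\R(\A, v_L)$. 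This is not cosmetic: the whole difficulty of lower bounds in this model is that one controls the optimal strategy only through the inequality ``optimal surplus $\geq$ surplus of any fixed strategy,'' and your case~1 never invokes it.

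The second gap is the step you yourself flag as the main obstacle, and the plan you sketch does not close quantitatively. Imitation gives $\Sur(\A, v_H) \geq (v_H - v_L)\sum_{t : p_t^L \leq v_L}\gamma^{t-1}$, which with $v_L = 1/2$ is at most $\tfrac{1}{2}T_\gamma$, so your accounting identity only bounds the discounted revenue by roughly $\tfrac{1}{2}T_\gamma$. Converting that into an undiscounted revenue bound over an initial block of $c\,T_\gamma$ rounds costs a factor $\gamma^{-cT_\gamma} \geq e^{c}$, so the block regret is at least $cT_\gamma - \tfrac{1}{2}e^{c}T_\gamma$, which is negative for every $c>0$ because $\max_c c e^{-c} = e^{-1} < 1/2$. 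Shrinking $v_L$ to strengthen the imitation bound destroys case~1, whose yield is $v_L T_L$, so the two cases cannot be balanced. The obstacle has a cleaner resolution that needs no block decomposition: for $v_H = 1$ every per-round regret $1 - a_t p_t$ is nonnegative, hence $\R(\A, 1) = \sum_t (1 - a_t p_t) \geq \sum_t \gamma^{t-1}(1 - a_t p_t) \geq \sum_t \gamma^{t-1} a_t (1 - p_t) = \Sur(\A, 1)$; the undiscounted regret already dominates the buyer's optimal discounted surplus, and imitation then yields $\R(\A,1) \geq \tfrac{1}{2}\bigl(\sum_{t=1}^T \gamma^{t-1} - T_L\bigr) = \Omega(T_\gamma)$ whenever $T_L$ is a small fraction of $T_\gamma$ (and $T \gtrsim T_\gamma$). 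Even with that fix, case~1 still needs to be rebuilt so that it speaks about the optimal, rather than truthful, $v_L$-buyer.
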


This theorem is in fact given for the stochastic setting where
the buyer's valuation is a random variable taken from some fixed
distribution $\mathcal{D}$. However, the proof of the theorem selects
$\mathcal{D}$ to be a point mass, therefore reducing the scenario to a
fixed priced setting.

\begin{theorem}[ \citep{KleinbergLeighton}]
Given any algorithm $\A$ to be played against a truthful buyer, there
exists a value $v \in [0,1]$ such that $\R(\A, v) \geq C \log \log T$ for
some universal constant $C$.
\end{theorem}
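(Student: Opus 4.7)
The plan is to prove the $\Omega(\log\log T)$ lower bound via an adversarial adaptive construction over the algorithm's decision tree, exploiting a doubly-exponential bottleneck on how fast a candidate interval for $v$ can shrink. By Yao's minimax principle it suffices to consider deterministic algorithms $\A$; any such algorithm played against a truthful buyer is captured by an infinite binary decision tree whose nodes carry prices and whose two children correspond to rejection (left) and acceptance (right). For any $v \in [0,1]$, the induced play traces a unique root-to-leaf path of depth $T$, and the set of valuations that produce a given path forms an interval $[a, b]$. A first observation, valid on any such leaf, is that the regret $v \mapsto \R(\A, v) = Tv - \sum_t a_t p_t$ is affine in $v$ with slope $T$, so any algorithm with worst-case regret at most $R$ must force $b - a \leq R/T$ on every reachable leaf.

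The heart of the argument is an adversarial descent down the tree: maintain a candidate interval $[a_k, b_k] \ni v$ of width $w_k$, initialized to $[1/2, 1]$, and at each node whose offered price $p$ falls in $(a_k, b_k)$ commit $v$ to whichever of $[a_k, p]$ or $[p, b_k]$ is the larger sub-interval, yielding the next candidate. Group the descent into phases delimited by consecutive rejections along the chosen path. The central quantitative claim is that any phase can shrink the interval by at most a squaring factor, i.e., $w_{k+1} \geq c\, w_k^2$ for a universal constant $c$. Granting this, iterating from $w_0 = 1/2$ to the terminal width $w_K \leq R/T$ forces $K = \Omega(\log\log(T/R))$ phases; and since $a_k$ stays $\geq 1/2$ throughout the descent, each rejection contributes $\geq 1/2$ to the regret at the committed valuation $v = b_K$, so that $\R(\A, b_K) \geq K/2 = \Omega(\log\log T)$.

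Justifying the squaring rate $w_{k+1} \geq c\, w_k^2$ is the main obstacle. Between two consecutive rejections the algorithm collects only one-sided information (``$v \geq p$'' for each accepted interior probe), and the pick-the-larger-half rule ensures each interior accept can push up the left endpoint by at most half of the remaining interval. An amortized argument must then balance this information gain against the acceptance-regret cost of each interior probe: an accepted probe at distance $\delta$ from $v$ contributes roughly $\delta$ to the regret for every remaining round in which that price is kept in play, so the algorithm cannot afford a large number of fine-grained interior probes within a single phase without either exhausting the $O(1)$ per-phase regret budget or triggering an adversarial rejection, yielding the $w_{k+1} \geq c\,w_k^2$ recurrence. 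Carefully attributing each round's contribution to the right phase, and handling probes that fall outside the current candidate interval (which are informationally useless yet still accrue regret), is the technical core that distinguishes the tight $\log\log T$ rate from the weaker $\log T$ bound that a naive binary-search accounting would produce.
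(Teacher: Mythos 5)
First, note that this statement is not proved in the paper: it is quoted from \citet{KleinbergLeighton} and used only as an ingredient of Corollary~\ref{coro:lower}, so there is no in-paper proof to compare against. Judged on its own terms, your proposal follows the general philosophy of the known argument (a doubly-exponential bottleneck on how fast the set of valuations consistent with the observations can shrink, with each ``phase'' costing $\Omega(1)$), and your preliminary observation that worst-case regret $R$ forces every reachable leaf interval to have width at most $R/T$ is correct. However, there is a genuine gap at the central claim.

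The deterministic ``commit to the larger sub-interval'' adversary does not enforce the squaring recurrence $w_{k+1} \geq c\, w_k^2$; in fact the recurrence is false for that adversary. Consider the algorithm that at every round offers $p = a_k + w_k/2 - \eta$ for a tiny $\eta > 0$, just below the midpoint of the current candidate interval. Your adversary always commits to the larger piece $[p, b_k]$, so every price is accepted: there is a single phase, the interval shrinks geometrically to width about $2^{-T}$, and the total regret at the committed valuation $v = 1$ is $\sum_t (1 - p_t) \leq \sum_t 2^{-t} = O(1)$. (The same happens against plain bisection with ties broken toward acceptance.) The amortized accounting you sketch does not repair this: an accepted probe contributes $v - p$ to the regret exactly once, for the single round at which it is offered --- prices are not ``kept in play'' for the remaining rounds --- so a geometric sequence of probes approaching $v$ from below costs $O(w_k)$ in total while shrinking the interval by $2^{-m}$ for arbitrary $m$, and the larger-half rule never rejects a bottom-half probe, so no rejection is triggered either. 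The known proof avoids this by randomizing: take $v$ uniform on $[1/2,1]$ (Yao's principle), so that a probe at $p$ in the posterior interval $[a,b]$ is rejected with probability $(p-a)/(b-a)$ at cost at least $1/2$, and a probe whose \emph{acceptance} would shrink the interval super-quadratically must lie close to $b$ and is therefore rejected with probability close to $1$. One then shows that the expected one-round regret dominates, up to a universal constant, the expected increase of the potential $\log\log(1/w_t)$, and combines this with your leaf-width observation, which forces the terminal potential to be $\Omega(\log\log (T/R))$ in expectation. Your adversary, by always following the branch on which the interval shrinks the least, never charges the algorithm for the risk taken by aggressive probes, and that charge is precisely where the $\log\log T$ rate comes from.
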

Combining these results leads immediately to the following.

\begin{corollary}
\label{coro:lower}
Given any algorithm $\A$, there exists a buyer's valuation $v \in
[0,1]$ such that $\R(\A, v) \geq \max\Big(\frac{1}{12}T_\gamma,  C \log
\log T\Big)$, for a universal constant $C$.
\end{corollary}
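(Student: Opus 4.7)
The plan is to combine the two preceding lower bounds by a case split on which of the two terms inside the maximum dominates. Fix any algorithm $\A$. If $\frac{1}{12}T_\gamma \geq C\log\log T$, invoke the KAU theorem to produce a valuation $v$ with $\R(\A,v) \geq \frac{1}{12}T_\gamma$, which is exactly the maximum. Otherwise, $\frac{1}{12}T_\gamma < C\log\log T$ and the Kleinberg--Leighton theorem produces a valuation $v$ with $\R(\A,v) \geq C\log\log T$, which again equals the maximum. In either case a single witnessing $v$ is produced, which is all the corollary requires.

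The only point warranting a brief justification is that the Kleinberg--Leighton bound is phrased against a truthful buyer while the corollary measures strategic regret. This transfer is essentially free. On the hard valuation $v$ singled out by the Kleinberg--Leighton construction, either the strategic buyer's optimal response is itself truthful (and then the two regrets coincide), or her optimal response involves lying, i.e.\ rejecting some price $p_t \le v$. Such a lie removes the nonnegative payment $p_t$ from the seller's cumulative revenue $\sum_t a_t p_t$, and under any adaptive pricing rule the subsequent prices it triggers can only be at most what they would have been under truthful play at that step; in particular the buyer would only choose to lie if doing so lowers future prices, which further reduces cumulative revenue. Hence the strategic regret on this $v$ is at least the truthful regret, and the $C\log\log T$ lower bound carries over.

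The proof presents no substantive technical obstacle: it is a two-line case analysis combined with the routine observation above about how strategic deviations from truthful play relate to revenue. Because the two theorems are already in hand, I would keep the write-up to just a few lines, explicitly naming the witnessing $v$ in each case and citing both theorems.
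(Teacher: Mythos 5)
Your case split is exactly the paper's (unwritten) argument: the paper proves this corollary with the single sentence ``combining these results leads immediately to the following,'' i.e.\ take whichever of the two witnesses corresponds to the larger term. So the skeleton is fine, and you correctly identified the one step that actually needs an argument, namely transferring the Kleinberg--Leighton $C\log\log T$ bound, proved against a \emph{truthful} buyer, to the strategic regret computed under the surplus-maximizing buyer's actions. The problem is that the justification you give for this transfer is false. You claim that for the same algorithm and the same $v$ the strategic regret is at least the truthful regret, on the grounds that a lie deletes a nonnegative payment and can only lower subsequent prices, ``which further reduces cumulative revenue.'' Revenue is $\sum_t a_t p_t$, not a function of prices alone: lower subsequent prices can \emph{increase} revenue if the buyer actually purchases at them. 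Concretely, let $\A$ offer $0.5$ at round $1$, then $0.9$ forever if accepted and $0.45$ forever if rejected (this algorithm is even consistent in the paper's sense), and take $v = 0.6$ with $\gamma$ close to $1$. The truthful buyer accepts round $1$ and nothing else, so revenue is $0.5$ and regret is about $0.6\,T$; the optimal strategic buyer lies at round $1$ and then accepts forever, so revenue is $0.45(T-1)$ and regret is about $0.15\,T$. The lie and the lower prices occur exactly as in your argument, yet the strategic regret is far \emph{smaller} than the truthful regret. For non-consistent algorithms the situation is worse still: prices may rise after a rejection, and a strategic buyer may profitably accept a price $p_t > v$, making the per-round regret $v - p_t$ negative --- a case your argument does not consider at all.

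So the monotonicity lemma you rely on is not true, and the transfer does not come for free from it. To close the gap you would need either to read $\R(\A, v)$ in the lower bound as a worst case over buyer behaviours consistent with valuation $v$ (truthful play being one admissible behaviour, which is the reading under which the paper's ``immediately'' is defensible), or to rerun the Kleinberg--Leighton information-theoretic argument against the optimal discounted-surplus-maximizing buyer. The paper does neither explicitly, so you are not missing a written argument; but as it stands your write-up replaces the paper's unproved assertion with a proof of it that contains a false step, which is worse than leaving the assertion bare.
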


We now compare the upper bounds given in the previous section with the
bound of Corollary~\ref{coro:lower}. For $\gamma > 1/2$, we have
$\R(\mathsf{PFS}_r, v) = O(T_\gamma \log T \log \log T)$. On the other
hand, for $\gamma \leq 1/2$, we may choose $r = 1$, in which case, by
Proposition~\ref{prop:regret}, $\R(\mathsf{PFS}_r, v) = O(\log \log
T)$. Thus, the upper and lower bounds match up to an $O(\log T)$
factor.

\section{Empirical results}

In this section, we present the result of simulations comparing the
\texttt{monotone} algorithm and our algorithm $\mathsf{PFS}_r$. The
experiments were carried out as follows: given a buyer's valuation
$v$, a discrete set of false valuations $\widehat v$ were selected out
of the set $\{.03, .06, \ldots, v \}$. Both algorithms were run
against a buyer making the seller believe her valuation is
$\widehat v$ instead of $v$. The value of $\widehat v$ achieving
the best utility for the buyer was chosen and the regret for both
algorithms is reported in Figure~\ref{fig:results}.

\begin{figure}[t]
\centering
  \begin{tabular}{cccc}
    \small{$\boldsymbol{\gamma = .85, \, v = .75}$}
      &\small{ $\boldsymbol{\gamma = .95, \, v = .75}$} 
      & \small{$\boldsymbol{\gamma = .75,  \, v = .25}$} 
     & \small{$\boldsymbol{\gamma = .80, \, v = .25}$}\\ 
    \includegraphics[scale=.28]{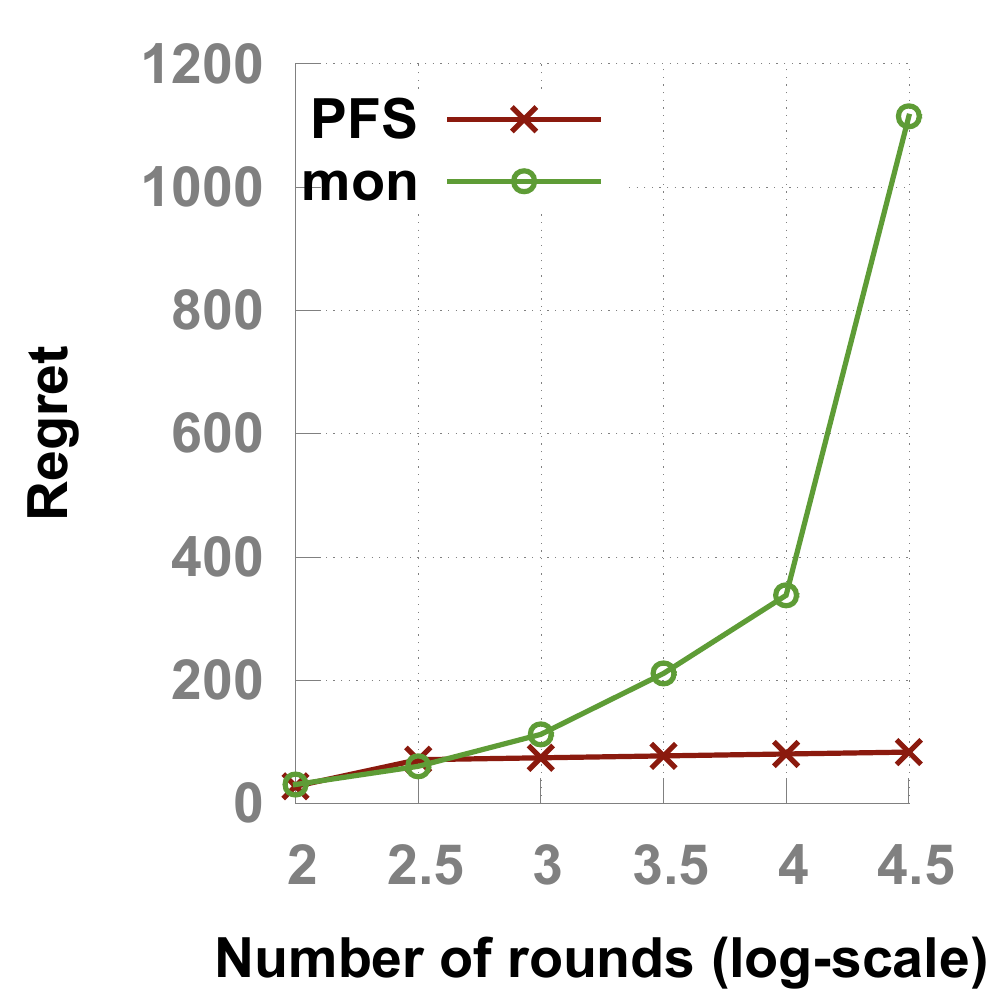}
    &\includegraphics[scale=.28]{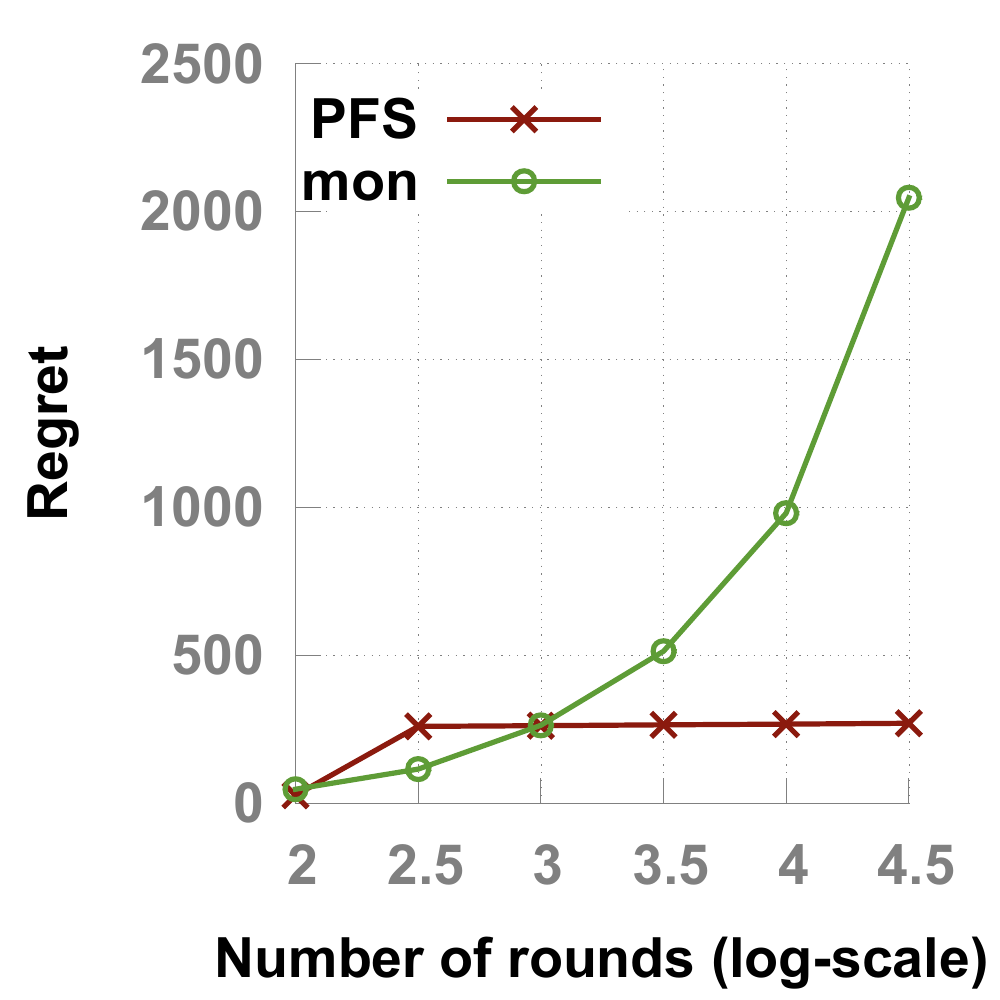}
    &\includegraphics[scale=.28]{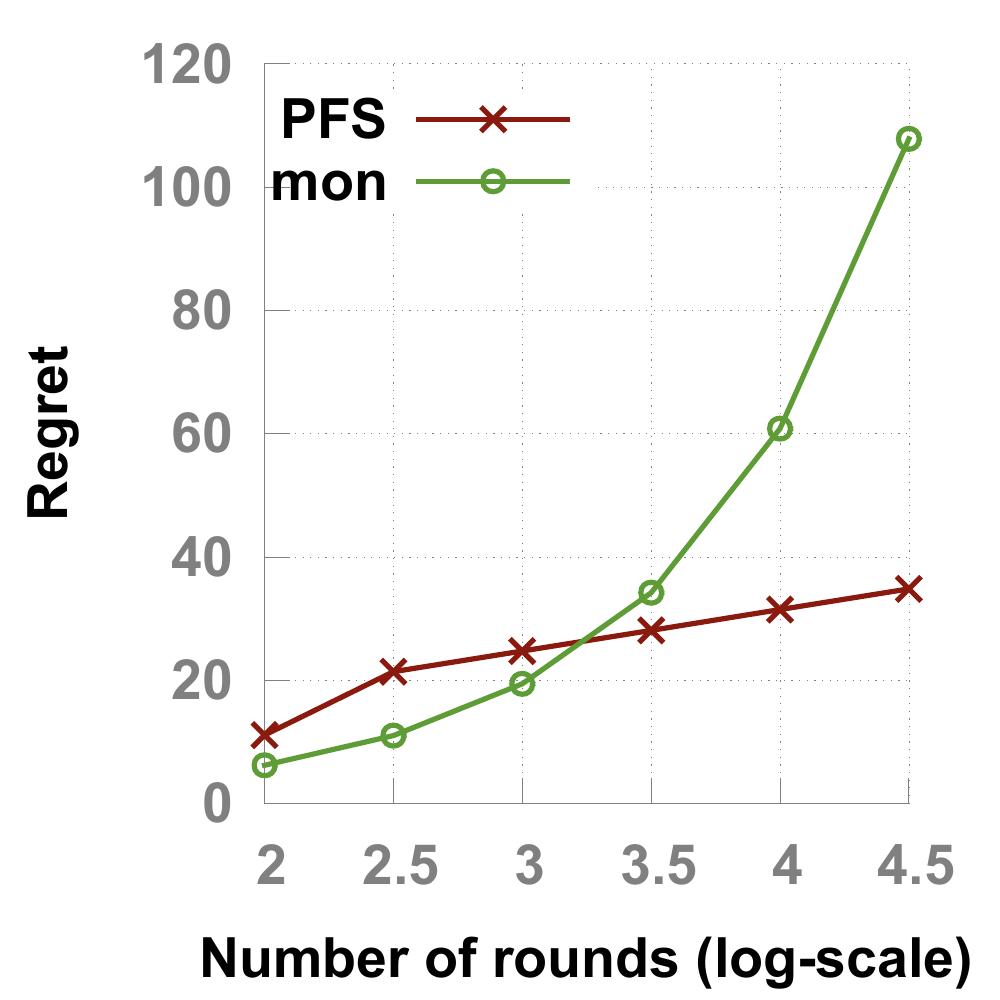}
    &\includegraphics[scale=.28]{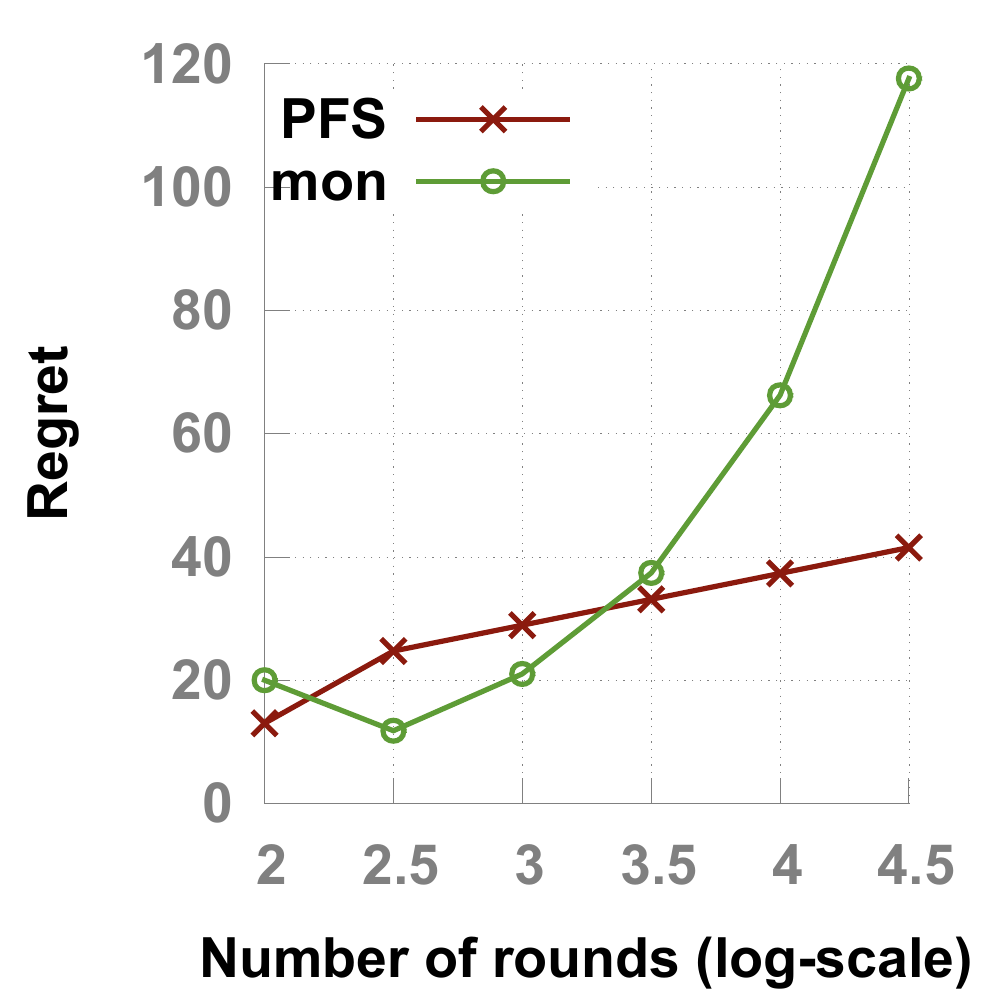}
  \end{tabular}
\vspace{-1em}
  \caption{\small{Comparison of the \texttt{monotone} algorithm and $\mathsf{PFS}_r$
    for different choices of $\gamma$ and $v$. 
The regret of each algorithm is plotted as a function of the number
rounds when $\gamma$ is not known to the algorithms
(first two figures) and when its value is made accessible to the algorithms
(last two figures).}}
\label{fig:results}
\vspace{-1em}
\end{figure}

We considered two sets of experiments. First, the value of parameter $\gamma$
 was left unknown to both algorithms and the
value of $r$ was set to $\log(T)$. This choice is motivated by the
discussion following Theorem~\ref{th:optimregret} since, for large
values of $T$, we can expect to achieve logarithmic regret. The first
two plots (from left to right) in Figure~\ref{fig:results} depict
these results. The apparent stationarity in the regret of
$\mathsf{PFS}_r$ is just a consequence of the scale of the plots as the
regret is in fact growing as $\log(T)$.
For the second set of experiments, we allowed access to the parameter
$\gamma$ to both algorithms.  The value of $r$ was chosen optimally
based on the results of Theorem~\ref{th:optimregret} and the parameter
$\beta$ of \texttt{monotone} was set to $1 - 1/\sqrt
  {TT_\gamma}$ to ensure regret in $O(\sqrt{T T_\gamma} + \sqrt{T})$.
It is worth noting that even though our algorithm was designed under
the assumption of some knowledge about the value of $\gamma$, the
experimental results show that an exponentially better performance
over the \texttt{monotone} algorithm is still attainable and in fact
the performances of the optimized and unoptimized versions of our
algorithm are comparable. A more comprehensive series of experiments
is presented in Appendix~\ref{sec:simulations}.

\section{Conclusion}

We presented a detailed analysis of revenue optimization algorithms
against strategic buyers. In doing so, we reduced the gap between
upper and lower bounds on strategic regret to a logarithmic
factor. Furthermore, the algorithm we presented is simple to analyze
and reduces to the truthful scenario in the limit of $\gamma
\rightarrow 0$, an important property that previous algorithms did not
admit. We believe that our analysis helps gain a deeper understanding
of this problem and that it can serve as a tool for studying more
complex scenarios such as that of strategic behavior in repeated
second-price auctions, VCG auctions and general market strategies.

\section*{Acknowledgments}

We thank Kareem Amin, Afshin Rostamizadeh and Umar Syed for several
discussions about the topic of this paper. This work was partly funded
by the NSF award IIS-1117591.

\newpage
\bibliographystyle{abbrvnat}
\bibliography{ref}

\newpage

\section{Appendix}
\label{sec:appendix}

\begin{lemma}
\label{lemma:decreasing}
The function $g: \gamma \mapsto \frac{\log \frac{1}{\gamma}}{1 - \gamma}$ is
decreasing over the interval $(0, 1)$.
\end{lemma}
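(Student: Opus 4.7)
The plan is to show $g'(\gamma) < 0$ on $(0,1)$ by a direct derivative calculation combined with a monotonicity argument for the numerator. Writing $g(\gamma) = \frac{-\log \gamma}{1-\gamma}$ and applying the quotient rule gives
\begin{equation*}
g'(\gamma) \;=\; \frac{(-1/\gamma)(1-\gamma) - (-\log\gamma)(-1)}{(1-\gamma)^2} \;=\; \frac{1 - 1/\gamma - \log\gamma}{(1-\gamma)^2}.
\end{equation*}
Since the denominator is positive, the sign of $g'(\gamma)$ is the sign of $h(\gamma) := 1 - 1/\gamma - \log\gamma$, and the lemma reduces to proving $h(\gamma) < 0$ for every $\gamma \in (0,1)$.

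Next I would analyze $h$ on $(0,1]$. A direct computation yields $h(1) = 0$ and
\begin{equation*}
h'(\gamma) \;=\; \frac{1}{\gamma^2} - \frac{1}{\gamma} \;=\; \frac{1-\gamma}{\gamma^2} \;>\; 0 \quad \text{for } \gamma \in (0,1).
\end{equation*}
Thus $h$ is strictly increasing on $(0,1)$ and attains the value $0$ only at the right endpoint $\gamma = 1$. It follows that $h(\gamma) < 0$ throughout $(0,1)$, which gives $g'(\gamma) < 0$ and hence $g$ is strictly decreasing on $(0,1)$.

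There is no substantive obstacle in this argument; the only mild care needed is organizing the calculation so that the sign analysis reduces cleanly to a single one-variable function $h$ with an obvious boundary value. An equivalent and perhaps slicker route would be the substitution $\gamma = e^{-x}$ with $x > 0$, which rewrites $g(\gamma) = x/(1 - e^{-x})$ and reduces the claim to showing that $x \mapsto x/(1-e^{-x})$ is increasing on $(0,\infty)$; this in turn follows from the elementary inequality $e^x \geq 1 + x$. I would include the direct derivative proof in the main write-up since it is the most self-contained.
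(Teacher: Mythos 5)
Your proof is correct and follows essentially the same route as the paper: both compute $g'$ by the quotient rule and reduce the claim to the negativity of the same numerator $1 - 1/\gamma - \log\gamma$ on $(0,1)$. The only cosmetic difference is that the paper disposes of that numerator by citing the inequality $\log(1-x) < -x$, whereas you reprove the equivalent fact via the monotonicity of $h(\gamma) = 1 - 1/\gamma - \log\gamma$ with $h(1) = 0$; both are fine.
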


\begin{proof}
This can be straightforwardly established:
\begin{equation*}
  g'(\gamma) = \frac{-\frac{1 - \gamma}{\gamma} + \log \frac{1}{\gamma}}{(1 - \gamma)^2}
= \frac{ \gamma \log\big(1 - \big[1 - \frac{1}{\gamma}\big]\big) - (1 - \gamma)}{\gamma (1 - \gamma)^2} < \frac{(1 - \gamma) - (1 - \gamma)}{\gamma (1 - \gamma)^2} = 0,
\end{equation*}
using the inequality $\log(1 - x) < -x$ valid for all $x < 0$.
\end{proof}

\begin{lemma}
\label{lemma:increasing}
Let $a \geq 0$ and let $g\colon D \subset \Rset \to [a, \infty)$ be a
decreasing and differentiable function. Then, the function $F\colon \Rset
\to \Rset$ defined by
\begin{equation*}
  F(\gamma) = g(\gamma) - \sqrt{g(\gamma)^2 - b}
\end{equation*}
is increasing for all values of $b \in [0,a]$. 
\end{lemma}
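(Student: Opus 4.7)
The plan is to verify that $F'(\gamma) \geq 0$ on the domain by a direct differentiation argument. Since $g$ takes values in $[a,\infty)$ and $b \in [0,a]$, one should first note that (under the implicit assumption that $g(\gamma)^2 \geq b$, which will hold in the intended application where $a$ is large enough) the quantity under the square root is nonnegative, so $F$ is well-defined and differentiable wherever $g$ is.

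Next, I would apply the chain rule to obtain
\begin{equation*}
F'(\gamma) \;=\; g'(\gamma) \;-\; \frac{g(\gamma)\, g'(\gamma)}{\sqrt{g(\gamma)^2 - b}} \;=\; g'(\gamma)\left(1 \;-\; \frac{g(\gamma)}{\sqrt{g(\gamma)^2 - b}}\right).
\end{equation*}
The rest is then a sign analysis of the two factors. Because $g$ is decreasing, $g'(\gamma) \leq 0$. Because $b \geq 0$ and $g(\gamma) \geq a \geq 0$, we have $\sqrt{g(\gamma)^2 - b} \leq g(\gamma)$, so the parenthetical factor $1 - g(\gamma)/\sqrt{g(\gamma)^2 - b}$ is at most $0$. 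The product of two nonpositive quantities is nonnegative, hence $F'(\gamma) \geq 0$, which is exactly the statement that $F$ is increasing.

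There is essentially no obstacle: the argument is a one-line derivative computation followed by a sign check. The only subtle point worth flagging in the write-up is the well-definedness of the square root, i.e.\ that the hypothesis $b \leq a$ combined with $g \geq a$ guarantees $g(\gamma)^2 - b \geq 0$ in the regime where the lemma is applied. Once this is noted, the monotonicity follows immediately from the factorization above.
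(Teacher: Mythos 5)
Your proof is correct and is essentially identical to the paper's: both compute $F' = g'\bigl[1 - g\,(g^2 - b)^{-1/2}\bigr]$ and conclude by the sign analysis $g' \leq 0$ and $\sqrt{g^2 - b} \leq g$. Your additional remark about the well-definedness of the square root (which requires $g^2 \geq b$, not quite guaranteed by $b \leq a$ alone when $a < 1$) is a fair observation that the paper glosses over, but it does not change the argument.
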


\begin{proof}
We will show that $F'(\gamma) \geq 0$ for all $\gamma \in D$. Since
$F' = g ' [1 - g (g^2 - b)^{-1/2}]$ and $g' \leq 0$ by
  hypothesis, the previous statement is equivalent to showing that
  $\sqrt{g^2 - b} \leq g$ which is trivially verified since $b \geq 0$. 
\end{proof}

\begin{reptheorem}{th:optimregret} 
Let $1/2 < \gamma < \gamma_0 < 1$ and $r^* = \Big\lceil \argmin_{r \geq
1} r + \frac{\gamma_0^r T}{(1 - \gamma_0) (1 - \gamma_0^r)} \Big\rceil.$
For any $v \in [0,1]$, if $T > 4$, the regret of $\mathsf{PFS}_{r^*}$ satisfies
\begin{equation*}
  \R(\mathsf{PFS}_{r^*}, v) \leq (2 v \gamma_0 T_{\gamma_0}\log c T  + 1
    + v )(\log_2 \log_2 T + 1) + 4 T_{\gamma_0},
\end{equation*}
where $c = 4 \log 2$.
\end{reptheorem}

\begin{proof}
  It is not hard to verify that the function $r \mapsto r +
  \frac{\gamma_0^rT}{(1 - \gamma_0) (1 - \gamma_0^r)}$ is convex and
  approaches infinity as $r \rightarrow \infty$. Thus, it admits a
  minimizer $\bar{r}^*$ whose explicit expression can be found by solving
  the following equation
\begin{equation*}
0 = \frac{d}{dr} \left( r + \frac{\gamma_0^r T}{(1 -
      \gamma_0) (1 - \gamma_0^r)}\right)
= 1 + \frac{\gamma_0^r T \log \gamma_0}{(1 - \gamma_0) (1- \gamma_0^r)^2}.
\end{equation*}
Solving the corresponding second-degree equation yields 
\begin{equation*}
  \gamma_0^{\bar{r}^*}  =  \frac{ 2 + \frac{T \log(1/\gamma_0)}{1  - \gamma_0} -
    \sqrt{\Big(2 + \frac{T \log(1/\gamma_0)}{1 - \gamma_0}\Big)^2 - 4}}{2} =: F(\gamma_0).
\end{equation*}
By Lemmas~\ref{lemma:decreasing} and \ref{lemma:increasing}, 
the function $F$ thereby defined is increasing. Therefore,
$\gamma_0^{\bar{r}^*} \leq \lim_{\gamma_0 \rightarrow 1} F(\gamma_0)$ and
\begin{align}
\label{eq:gammaR}
\gamma_0^{\bar{r}^*} 
\leq \frac{2 + T - \sqrt{(2 + T)^2 - 4}}{2}  
= \frac{4}{2(2 + T + \sqrt{(2 + T)^2 - 4})} \leq \frac{2}{T}. 
\end{align}
By the same argument, we must have $\gamma_0^{\bar{r}^*} \geq F(1/2)$, that is
\begin{align*}
\gamma_0^{\bar{r}^*} 
\geq F(1/2) 
& = \frac{2 + 2 T \log 2 - \sqrt{(2 + 2 T \log 2)^2 - 4}}{2} \\
& = \frac{4}{2( 2 + 2T \log 2 + \sqrt{(2 + 2 T \log 2)^2 - 4})} \\
& \geq \frac{2}{4 + 4 T \log 2} \geq \frac{1}{4 T \log 2}.
\end{align*}
Thus, 
\begin{equation}
\label{eq:Rstar}
  r^* 
= \lceil \bar{r}^* \rceil
\leq \frac{\log(1 / F(1/2))}{\log(1/\gamma_0)} + 1 
\leq \frac{\log(4 T \log 2)}{\log 1 / \gamma_0} + 1.
\end{equation}
Combining inequalities \eqref{eq:gammaR} and
\eqref{eq:Rstar} with \eqref{eq:reggammabound} gives
\begin{align*}
  \R(\mathsf{PFS}_{r^*}, v) &\leq \left( v \frac{\log(4 T \log 2)}{\log 1 / \gamma_0}
    + 1 +v\right) (\lceil \log_2 \log_2 T \rceil +1 ) + \frac{(1 +
      \gamma_0) T }{(1 - \gamma_0)(T - 2)} \\
& \leq (2 v \gamma_0 T_{\gamma_0} \log(cT) + 1+ v) (\lceil \log_2 \log_2 T
\rceil + 1) + 4T_{\gamma_0}, 
\end{align*}
using the inequality $\log(\frac{1}{\gamma}) \geq \frac{1 -
  \gamma}{2 \gamma}$ valid for all $\gamma \in (1/2, 1)$.
\end{proof}

\subsection{Lower bound for monotone algorithms}
\label{sec:monotonelower}

\begin{lemma}
\label{lemma:expectation}
Let $(p_t)_{t = 1}^T$ be a decreasing sequence of prices. Assume that
the seller faces a truthful buyer. Then, if $v$ is sampled uniformly
at random in the interval $[\frac{1}{2}, 1]$, the following inequality
holds: 
\begin{equation*}
\E[\kappa^*] \geq \frac{1}{32 \E[v - p_{\kappa^*}]}.
\end{equation*}
\end{lemma}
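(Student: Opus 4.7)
The plan is to prove the algebraically equivalent statement $\E[\kappa^*]\cdot \E[v - p_{\kappa^*}] \geq 1/32$ by combining a Markov-type truncation with a Cauchy--Schwarz bound on squared interval lengths. Set $k = \E[\kappa^*]$; the goal reduces to showing $\E[v - p_{\kappa^*}] \geq 1/(32k)$, and the intuition is the natural tradeoff between the two quantities: if the prices are densely packed then $v - p_{\kappa^*}$ is small but $\kappa^*$ must be large, while sparse prices give the opposite.

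First, I would apply Markov's inequality to the non-negative random variable $\kappa^*$ to obtain $P(\kappa^* < 2k) \geq 1/2$. Since $v$ is uniform on $[1/2,1]$ with density $2$, this translates to the Lebesgue measure bound $|S| \geq 1/4$, where $S = \{v \in [1/2,1] : \kappa^*(v) < 2k\}$. Because the prices $(p_t)$ are decreasing and the buyer is truthful, $\kappa^*(v) < 2k$ is equivalent to $v \geq p_{\lfloor 2k \rfloor}$, so $S$ is a single sub-interval of $[1/2,1]$ and $\kappa^*$ assumes at most $m \leq 2k$ distinct integer values on $S$. Partition $S$ according to the value of $\kappa^*(v)$; this yields at most $m$ sub-intervals of lengths $\ell_1, \ldots, \ell_m$ with $\sum_i \ell_i = |S| \geq 1/4$.

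On the sub-interval where $\kappa^*(v) = t$, the price $p_{\kappa^*(v)} = p_t$ is constant and satisfies $p_t \leq v$. A short computation using the substitution $u = v - p_t$ shows that the integral $\int (v - p_t)\,dv$ over the $i$-th sub-interval is at least $\ell_i^2/2$. Summing over $i$ and applying Cauchy--Schwarz in the form $\sum_i \ell_i^2 \geq (\sum_i \ell_i)^2/m$ gives
\begin{equation*}
\int_S (v - p_{\kappa^*(v)})\,dv \;\geq\; \tfrac{1}{2}\sum_{i=1}^m \ell_i^2 \;\geq\; \frac{(1/4)^2}{2 \cdot 2k} \;=\; \frac{1}{64k}.
\end{equation*}
Multiplying by the density $2$ and using $\E[v - p_{\kappa^*}] \geq 2\int_S (v - p_{\kappa^*(v)})\,dv$ yields $\E[v - p_{\kappa^*}] \geq 1/(32k)$, as desired.

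The main technical wrinkle I anticipate is boundary bookkeeping: the leftmost sub-interval of $S$ may begin at $1/2$ rather than at some $p_t$, in which case the endpoint is not exactly where $v - p_t$ vanishes. However, when $p_t \leq 1/2$ one has $v - p_t \geq v - 1/2$ throughout that piece, which only strengthens the per-piece bound $\ell_i^2/2$. So the boundary terms do not break the argument, and no other obstacle appears.
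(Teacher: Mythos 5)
Your proof is correct, but it takes a genuinely different route from the paper's. The paper telescopes $p_{\kappa^*} = p_1 - \sum_{\kappa = 2}^{\kappa^*}(p_{\kappa - 1} - p_\kappa)$, bounds the random partial sum via Cauchy--Schwarz against $\sqrt{\kappa^* \sum_\kappa (p_{\kappa-1}-p_\kappa)^2}$, identifies $\sum_\kappa (p_{\kappa-1}-p_\kappa)^2$ with (twice) $\E[v - p_{\kappa^*}]$, and closes with Jensen's inequality and the anchor $\E[p_{\kappa^*}] \leq \E[v] = \frac{3}{4}$ together with $p_1 = 1$, which forces the expected total price drop to be at least $\frac{1}{4}$. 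You instead obtain the ``mass at least $\frac{1}{4}$'' input from Markov's inequality applied to $\kappa^*$, restrict to the event $\{\kappa^* < 2\E[\kappa^*]\}$, and apply Cauchy--Schwarz to the deterministic lengths of the at most $2\E[\kappa^*]$ acceptance intervals covering that event. Both arguments rest on the same core fact---each acceptance interval $[p_\kappa, p_{\kappa-1}]$ contributes at least half its squared length to the integral defining $\E[v - p_{\kappa^*}]$---and both land on the constant $32$. What your version buys: it avoids the Jensen step, it never uses $p_1 = 1$ (so it applies verbatim to decreasing sequences with arbitrary starting price), and your explicit treatment of the interval truncated at $v = \frac{1}{2}$ is cleaner than the paper's, whose displayed identity for $\E[v - p_{\kappa^*}]$ is really an inequality at the last interval and silently drops the density factor $2$ of the uniform law on $[\frac{1}{2},1]$. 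What the paper's version buys is brevity, with no case analysis on where the intervals sit. One cosmetic nit: the threshold index for $S$ should be $\lceil 2k\rceil - 1$ rather than $\lfloor 2k\rfloor$ when $2k$ is an integer, but this does not affect the bound $m \leq 2k$ that you actually use.
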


\begin{proof}
Since the buyer is truthful,  $\kappa^*(v) = \kappa$ if
and only if $v \in [p_\kappa, p_{\kappa - 1}]$. Thus, we can write
\begin{equation*}
 \E[v - p_{\kappa^*}]
= \sum_{\kappa  = 2}^{\kappa_{\max}} \E\Big[\Ind_{v\in [p_\kappa,
  p_{\kappa - 1} ]} (v - p_\kappa)\Big]
= \sum_{\kappa  = 2}^{\kappa_{\max}} \int_{p_\kappa}^{p_{\kappa - 1}}
\!\! (v - p_\kappa) \, dv
= \sum_{\kappa = 2}^{\kappa_{\max}} \frac{(p_{\kappa - 1} - p_{\kappa})^2}{2},
\end{equation*}
where $\kappa_{\max} = \kappa^*(\frac{1}{2})$. 
Thus, by the Cauchy-Schwarz inequality, we can write
\begin{align*}
\E\left[\sum_{\kappa = 2}^{\kappa^*} p_{\kappa -1} - p_\kappa\right] 
& \leq \E\left[  \sqrt{\kappa^* \sum_{\kappa =2}^{\kappa^*}(p_{\kappa -1} -  p_\kappa)^2}\right]\\
& \leq \E\left[\sqrt{\kappa^*  \sum_{\kappa=2}^{\kappa_{\max}}(p_{\kappa-1} - p_\kappa)^2}\right] \\
& = \E\left[\sqrt{2\kappa^*  \E[v - p_{\kappa^*}]}\right]  \\
& \leq \sqrt{\E[\kappa^*]}\sqrt{2 \E[v - p_\kappa^*]},
\end{align*}
where the last step holds by Jensen's inequality. In view of that, since $v > p_{\kappa^*}$, it
follows that:
\begin{equation*}
  \frac{3}{4} = \E[v] 
\geq \E[p_{\kappa^*}] = \E\left[\sum_{\kappa = 2}^{\kappa^*}p_{\kappa } - p_{\kappa -1}\right] + p_1 
\geq -\sqrt{\E[\kappa^*]} \sqrt{2 \E[v - p_{\kappa^*}]} + 1.
\end{equation*}
Solving for $\E[\kappa^*]$ concludes the proof. 
\end{proof}

The following lemma characterizes the value of $\kappa^*$ when facing
a strategic buyer. 

\begin{lemma}
\label{lemma:acceptance}
For any $v \in [0, 1]$, $\kappa^*$ satisfies $v - p_{\kappa^*} \geq
C_\gamma^{\kappa^*} (p_{\kappa^*} - p_{\kappa^* + 1})$ with
$C_\gamma^{\kappa^*} = \frac{\gamma - \gamma^{T - \kappa^* + 1}}{1 -
\gamma}$. Furthermore, when $\kappa^* \leq 1 + \sqrt{T_\gamma T}$ and
$T \geq T_\gamma + \frac{2\log(2/\gamma)}{\log(1/\gamma)}$,
$C_\gamma^{\kappa^*}$ can be replaced by the universal constant
$C_\gamma = \frac{\gamma}{2 (1 - \gamma)}$.
\end{lemma}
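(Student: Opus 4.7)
The plan is to use a single exchange argument: compare the buyer's discounted surplus from accepting at round $\kappa^*$ with the surplus she would obtain by rejecting once more and accepting at round $\kappa^*+1$. In any monotone algorithm $\A_m$, once a price is accepted it is repeated for every remaining round, so both continuations are completely determined by the fixed price sequence $(p_t)$ and the buyer's only real choice is when to first accept. Since $\kappa^*$ is her optimal first acceptance time, the first continuation must yield at least as much surplus as the second, and this single inequality will deliver the first claim algebraically.

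Concretely, I would write out accepting at $\kappa^*$ as $(v-p_{\kappa^*})\sum_{t=\kappa^*}^T \gamma^{t-1}$ and accepting at $\kappa^*+1$ as $(v-p_{\kappa^*+1})\sum_{t=\kappa^*+1}^T \gamma^{t-1}$. Setting $A = \sum_{t=\kappa^*}^T \gamma^{t-1} = \gamma^{\kappa^*-1}\frac{1-\gamma^{T-\kappa^*+1}}{1-\gamma}$ and $\delta = p_{\kappa^*}-p_{\kappa^*+1} \geq 0$, and using $v-p_{\kappa^*+1} = (v-p_{\kappa^*})+\delta$, the optimality inequality simplifies to
\begin{equation*}
(v-p_{\kappa^*})\,\gamma^{\kappa^*-1} \geq \delta\,(A - \gamma^{\kappa^*-1}).
\end{equation*}
Dividing by $\gamma^{\kappa^*-1}$ and plugging in the closed form of $A$ gives $v-p_{\kappa^*} \geq \bigl(\frac{1-\gamma^{T-\kappa^*+1}}{1-\gamma}-1\bigr)\delta = \frac{\gamma - \gamma^{T-\kappa^*+1}}{1-\gamma}\,\delta$, which is exactly the claim with constant $C_\gamma^{\kappa^*}$.

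For the universal-constant refinement, I factor $C_\gamma^{\kappa^*} = \frac{\gamma(1-\gamma^{T-\kappa^*})}{1-\gamma}$, so it suffices to show $\gamma^{T-\kappa^*} \leq 1/2$, i.e.\ $T-\kappa^* \geq \frac{\log 2}{\log(1/\gamma)}$. Using $\kappa^* \leq 1+\sqrt{T_\gamma T}$ and the AM-GM bound $\sqrt{T_\gamma T} \leq (T_\gamma+T)/2$ gives $T-\kappa^* \geq (T-T_\gamma)/2 - 1$. Rewriting the hypothesis $T \geq T_\gamma + \frac{2\log(2/\gamma)}{\log(1/\gamma)}$ as $T \geq T_\gamma + \frac{2\log 2}{\log(1/\gamma)} + 2$ then yields $(T-T_\gamma)/2 - 1 \geq \frac{\log 2}{\log(1/\gamma)}$, closing the argument.

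The only conceptual subtlety is verifying that the exchange strategy is legitimately available: the buyer must be able to reject $p_{\kappa^*}$ and subsequently be offered $p_{\kappa^*+1}$, which holds by definition of $\A_m$ since it offers $p_{\kappa^*+1}$ after any rejection at step $\kappa^*$ regardless of prior history. Beyond this observation, the proof is one line of algebra for the first part and one line of AM-GM plus arithmetic for the second, so I do not expect a substantial obstacle.
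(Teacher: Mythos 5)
Your proof is correct and takes essentially the same route as the paper's: the first inequality is obtained from the identical single-deviation comparison between first-accepting at $\kappa^*$ and first-accepting at $\kappa^*+1$, followed by the same algebra. The constant refinement also matches the paper, which uses the AM-GM bound $\sqrt{T_\gamma T} \leq (T+T_\gamma)/2$ and the hypothesis on $T$ to conclude $\gamma^{T-\kappa^*+1} \leq \gamma/2$ --- equivalent to your $\gamma^{T-\kappa^*} \leq 1/2$.
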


\begin{proof}
Since an optimal strategy is played by the buyer, the surplus obtained
by accepting a price at time $\kappa^*$ must be greater than the
corresponding surplus obtained when accepting the first price at time
$\kappa^* + 1$. It thus follows that:
\begin{align*}
  \sum_{t = \kappa^*}^T \gamma^{t -1} (v - p_{\kappa^*}) &\geq \sum_{t = \kappa^* + 1}^T \gamma^{t-1}(v - p_{\kappa^* + 1}) \\
\Rightarrow \gamma^{\kappa^* -1 } (v  - p_{\kappa^*}) & \geq \sum_{t =
  \kappa^* + 1}^T \gamma^{t - 1}(p_{\kappa^*} - p_{\kappa^* + 1}) 
= \frac{\gamma^{\kappa^*} - \gamma^T}{1 - \gamma}(p_{\kappa^*} - p_{\kappa^*+1}).  
\end{align*}
Dividing both sides of the inequality by $\gamma^{\kappa^* - 1}$
yields the first statement of the lemma. Let us verify the second
statement. A straightforward calculation shows that the conditions on $T$
imply $T - \sqrt{T T_\gamma} \geq
\frac{\log(2/\gamma)}{\log(1/\gamma)}$, therefore
\begin{equation*}
C_\gamma^{\kappa^*}
\geq \frac{\gamma  - \gamma^{T - \sqrt{T_\gamma T }}}{1 - \gamma} \\
\geq \frac{\gamma - \gamma^{\frac{\log(2/\gamma)}{\log(1/\gamma)}}}{1
  - \gamma} 
= \frac{\gamma - \frac{\gamma}{2}}{1 - \gamma} 
= \frac{\gamma}{2(1 - \gamma)}.
\end{equation*}
\end{proof}

\begin{proposition}
\label{prop:gammalower}
For any convex decreasing sequence $(p_t)_{t = 1}^T$,  if $T \geq
T_\gamma + \frac{2 \log(2/\gamma)}{\log(1/\gamma)}$, then there
exists a valuation $v_0 \in [\frac{1}{2}, 1]$ for the buyer such that
\begin{equation*}
  \R(\A_m, v_0) 
\geq \max\left(\frac{1}{8}\sqrt{T - \sqrt{T}}, \sqrt{C_\gamma \Big(T - \sqrt{T_\gamma T}\Big) \left(\frac{1}{2} - \sqrt{\frac{C_\gamma}{T}}\right)}\right) 
= \Omega(\sqrt{T} + \sqrt{C_\gamma T}).
\end{equation*}
\end{proposition}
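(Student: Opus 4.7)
The plan is to follow the same case-split template as Proposition~\ref{prop:monotonereg}, but now to inject Lemma~\ref{lemma:acceptance} in order to produce the $\gamma$-dependent term inside the maximum. I would keep the same regret decomposition $\R(\A_m, v) = v\kappa^* + (T - \kappa^*)(v - p_{\kappa^*})$ and argue each of the two terms of the maximum separately.

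For the $\frac{1}{8}\sqrt{T - \sqrt T}$ piece, I would essentially replay Proposition~\ref{prop:monotonereg}. Either some $v_0 \in [1/2, 1]$ makes $\kappa^*(v_0)$ of order $\sqrt T$, in which case the first summand of the regret already delivers the bound, or $\kappa^*$ is uniformly small, in which case drawing $v \sim \mathrm{Unif}[1/2, 1]$ and applying Lemma~\ref{lemma:expectation} followed by the AM-GM step used in that earlier proof gives the required expected regret and hence the existence of a valuation $v_0$ with the stated guarantee. The weakening of the constant from $\frac{1}{4}$ to $\frac{1}{8}$ is there only to make room for the second threshold needed next.

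For the $\gamma$-dependent piece, I would repeat the dichotomy with the larger threshold $\sqrt{T_\gamma T}$. If $\kappa^*(v_0) > \sqrt{T_\gamma T}$ for some $v_0$, the first summand of the regret already exceeds the claimed bound. Otherwise $\kappa^*(v) \leq \sqrt{T_\gamma T}$ for every $v \in [1/2, 1]$, and since the hypothesis on $T$ matches exactly the one in Lemma~\ref{lemma:acceptance}, I can substitute $v - p_{\kappa^*} \geq C_\gamma (p_{\kappa^*} - p_{\kappa^*+1})$. Averaging the regret over $v \sim \mathrm{Unif}[1/2, 1]$ yields
\begin{equation*}
\E[\R(\A_m, v)] \geq \tfrac{1}{2}\E[\kappa^*] + C_\gamma \bigl(T - \sqrt{T_\gamma T}\bigr)\, \E\bigl[p_{\kappa^*} - p_{\kappa^*+1}\bigr],
\end{equation*}
and a Cauchy-Schwarz argument modelled on Lemma~\ref{lemma:expectation}, followed by optimization over $\E[\kappa^*]$, produces the $\sqrt{C_\gamma (T - \sqrt{T_\gamma T})(\tfrac{1}{2} - \sqrt{C_\gamma/T})}$ term.

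The hard part is the very last step, namely adapting Lemma~\ref{lemma:expectation} to the strategic setting. Its original proof rests on the truthful identity $\kappa^*(v) = \kappa \iff v \in [p_\kappa, p_{\kappa-1}]$, which no longer holds for a strategic buyer because the acceptance intervals are shifted upward by an amount governed by $C_\gamma$. Convexity of the price sequence $(p_t)$ is what keeps the telescoping underlying the Cauchy-Schwarz step robust to this shift, and the shift's overhead is precisely what generates the $\tfrac{1}{2} - \sqrt{C_\gamma/T}$ correction in the final bound.
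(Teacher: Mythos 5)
Your overall architecture (two dichotomies, one per term of the maximum) is reasonable, and your handling of the first term and of the case $\kappa^* > \sqrt{T_\gamma T}$ is fine. But the step you yourself flag as ``the hard part'' is a genuine gap, not a detail to be filled in: you need an analogue of Lemma~\ref{lemma:expectation} relating $\E[p_{\kappa^*} - p_{\kappa^*+1}]$ to $\E[\kappa^*]$ when $v \sim \mathrm{Unif}[\frac12,1]$ and the buyer is strategic. The proof of Lemma~\ref{lemma:expectation} hinges entirely on the truthful identity $\kappa^*(v) = \kappa \iff v \in [p_\kappa, p_{\kappa-1}]$, which yields the exact evaluation $\sum_\kappa (p_{\kappa-1}-p_\kappa)^2 = 2\E[v - p_{\kappa^*}]$ feeding the Cauchy--Schwarz step. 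For a strategic buyer the acceptance sets are not these intervals and their structure is not characterized anywhere (only the necessary condition of Lemma~\ref{lemma:acceptance} is available), so asserting that ``convexity keeps the telescoping robust to this shift'' is a claim, not an argument; nothing in your write-up actually produces the inequality $\E[p_{\kappa^*}-p_{\kappa^*+1}] \gtrsim \bigl(\tfrac12 - \sqrt{C_\gamma/T}\bigr)/\E[\kappa^*]$ that your final AM--GM step would require.

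The paper sidesteps this entirely with a deterministic two-point argument, which is worth contrasting with your route. Set $\kappa_{\min} = \kappa^*(1)$ and $\kappa_{\max} = \kappa^*(\tfrac12)$. Convexity is used exactly once and deterministically: since the gaps $p_{\kappa} - p_{\kappa+1}$ are nonincreasing and $p_{\kappa_{\max}} \leq \tfrac12$, the first gap in the telescoping sum from $\kappa_{\min}$ to $\kappa_{\max}$ dominates the average, i.e.\ $p_{\kappa_{\min}} - p_{\kappa_{\min}+1} \geq (p_{\kappa_{\min}} - \tfrac12)/\kappa_{\max}$. Lemma~\ref{lemma:acceptance} applied at the single valuation $v = 1$ then gives $1 - p_{\kappa_{\min}} \geq C_\gamma (p_{\kappa_{\min}} - \tfrac12)/\kappa_{\max}$, so the regret at $v=1$ is at least $C_\gamma (T - \kappa_{\min})(p_{\kappa_{\min}}-\tfrac12)/\kappa_{\max}$, while the regret at $v = \tfrac12$ is at least a constant times $\kappa_{\max}$; balancing these two bounds over $\kappa_{\max}$, and disposing separately of the degenerate cases $\kappa_{\min} > 1+\sqrt{T_\gamma T}$ and $p_{\kappa_{\min}} < 1 - \sqrt{C_\gamma/T}$, yields exactly the stated bound. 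No averaging over $v$ and no strategic version of Lemma~\ref{lemma:expectation} is needed. To rescue your probabilistic route you would first have to prove a distributional statement about $\kappa^*$ under strategic play, which is harder than the proposition itself.
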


\begin{proof}
In view of Proposition~\ref{prop:monotonereg}, we only need to verify
that there exists $v_0 \in [\frac{1}{2}, 1]$ such that  
\begin{equation*}
\R(\A_m, v_0)  \geq \sqrt{C_\gamma \Big(T - \sqrt{T_\gamma T}\Big) \left(\frac{1}{2} - \sqrt{\frac{C_\gamma}{T}}\right)}.
\end{equation*}
Let $\kappa_{\min} = \kappa^*(1)$, and $\kappa_{\max} =
\kappa^*(\frac{1}{2})$.  If $\kappa_{\min} > 1 + \sqrt{T_\gamma T}$,
then $\R(\A_m, 1) \geq 1 + \sqrt{T_\gamma T}$, from which the
statement of the proposition can be derived straightforwardly. Thus,
in the following we will only consider the case $\kappa_{\min} \leq 1
+ \sqrt{T_\gamma T}$.  Since, by definition, the inequality
$\frac{1}{2} \geq p_{\kappa_{\max}}$ holds, we can write
\begin{align*}
  \frac{1}{2} 
\geq p_{\kappa_{\max}} 
= \sum_{\kappa = \kappa_{\min} + 1}^{\kappa_{\max}} (p_\kappa -
p_{\kappa - 1}) + p_{\kappa_{\min}} 
\geq \kappa_{\max} (p_{\kappa_{\min} + 1} - p_{\kappa_{\min}}) + p_{\kappa_{\min}},
\end{align*}
where the last inequality holds by the convexity of the sequence and
the fact that $p_{\kappa_{\min}} - p_{\kappa_{\min}-1} \leq 0$. The
inequality is equivalent to $p_{\kappa_{\min}} - p_{\kappa_{\min}
+1}\geq \frac{p_{\kappa_{\min}} -
\frac{1}{2}}{\kappa_{\max}}$. Furthermore, by
Lemma~\ref{lemma:acceptance}, we have
\begin{align*}
\max_{v \in [\frac{1}{2}, 1]} \R(\A_m, v) 
& \geq \max \left(\kappa_{\max} , (T -
  \kappa_{\min}) (p_{\kappa_{\min}} - p_{\kappa_{\min} + 1}) \right) \\
& \geq \max \bigg(\kappa_{\max}, C_\gamma \frac{(T -
    \kappa_{\min})(p_{\kappa_{\min}} - \frac{1}{2})}{\kappa_{\max}}\bigg).
\end{align*}
The right-hand side is minimized for $\kappa_{\max} =
\sqrt{C_\gamma (T - \kappa_{\min})(p_{\kappa_{\min} } -
  \frac{1}{2})}$. Thus, there exists a valuation $v_0$ for which the
following inequality holds:
\begin{align*}
\R(A_m, v_0) 
&\geq \sqrt{C_\gamma (T - \kappa_{\min})\Big(p_{\kappa_{\min}} - \frac{1}{2}\Big)}
\geq \sqrt{C_\gamma\Big(T - \sqrt{T_\gamma T}\Big)\Big(p_{\kappa_{\min}} - \frac{1}{2}\Big)}.
\end{align*}
Furthermore, we can assume that $p_{\kappa_{\min}} \geq 1 -
\sqrt{\frac{C_\gamma}{T}}$ otherwise $\R(A_m, 1) \geq (T-1
)\sqrt{C_\gamma/T}$, which is easily seen to imply the desired lower
bound. Thus, there exists a valuation $v_0$ such that
\begin{equation*}
  \R(\A_m, v_0) \geq \sqrt{C_\gamma \Big(T - \sqrt{T_\gamma
    T}\Big) \left(\frac{1}{2} - \sqrt{\frac{C_\gamma}{T}}\right)},
\end{equation*}
which concludes the proof.
\end{proof}

\section{Simulations}
\label{sec:simulations}

Here, we present the results of more extensive simulations for
$\mathsf{PFS}_r$ and the \texttt{monotone} algorithm. Again, we
consider two different scenarios. Figure~\ref{fig:agn} shows the
experimental results for an agnostic scenario where the value of the
parameter $\gamma$ remains unknown to both algorithms and where the
parameter $r$ of $\mathsf{PFS}_r$ is set to $\log(T)$. The results
reported in Figure~\ref{fig:know} correspond to the second scenario
where the discounting factor $\gamma$ is known to the algorithms and
where the parameter $\beta$ for the \texttt{monotone} algorithm is set
to $1 - 1/\sqrt{TT_\gamma}$. The scale on the plots is logarithmic in
the number of rounds and in the regret.

\begin{figure}[t]
\centering
\includegraphics[scale=.55]{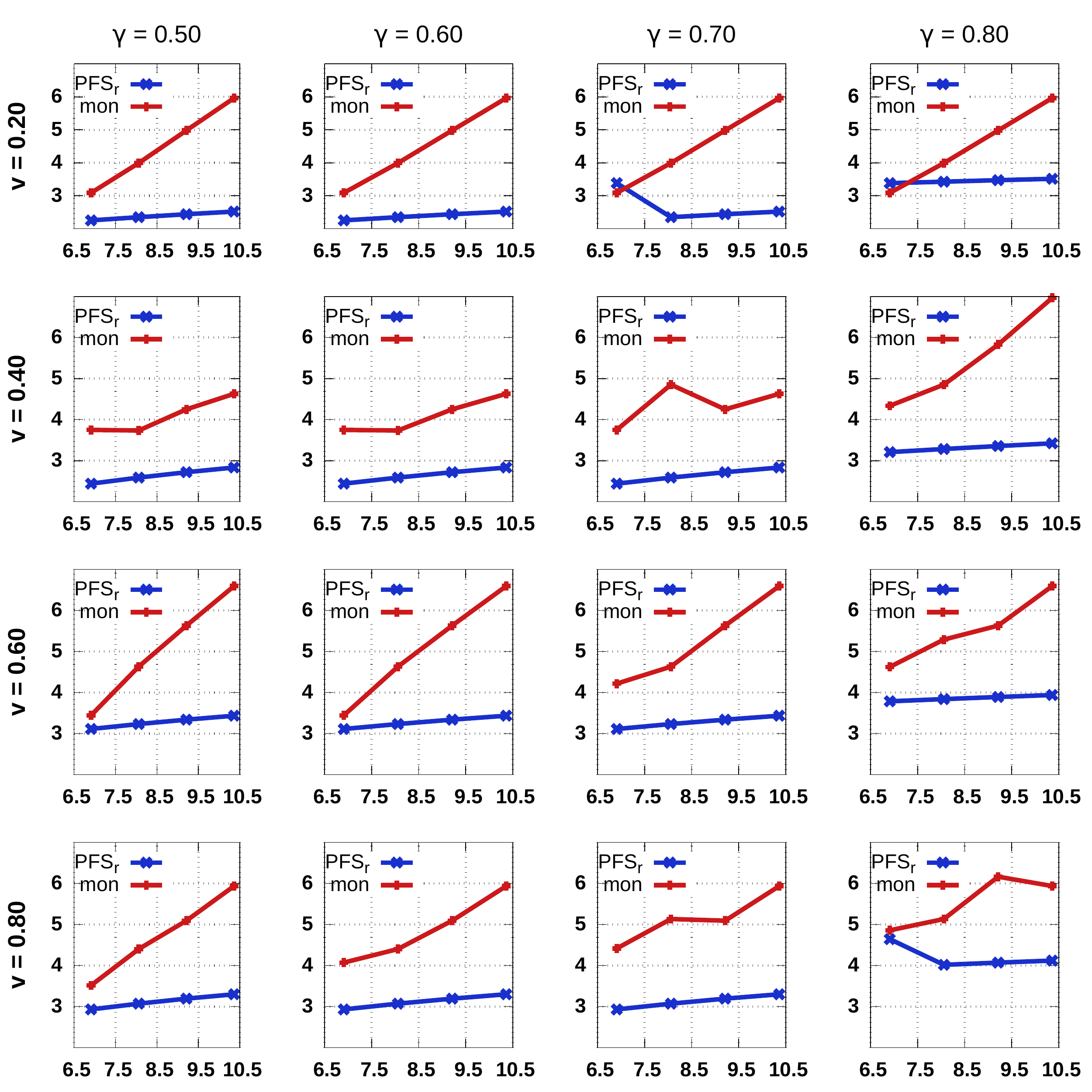}
\caption{Regret curves for $\mathsf{PFS}_r$ and \texttt{monotone} for
  different values of $v$ and $\gamma$. The value of $\gamma$ is
  not known to the algorithms.}
\label{fig:agn}
\end{figure}

\begin{figure}[t]
  \centering
  \includegraphics[scale=.55]{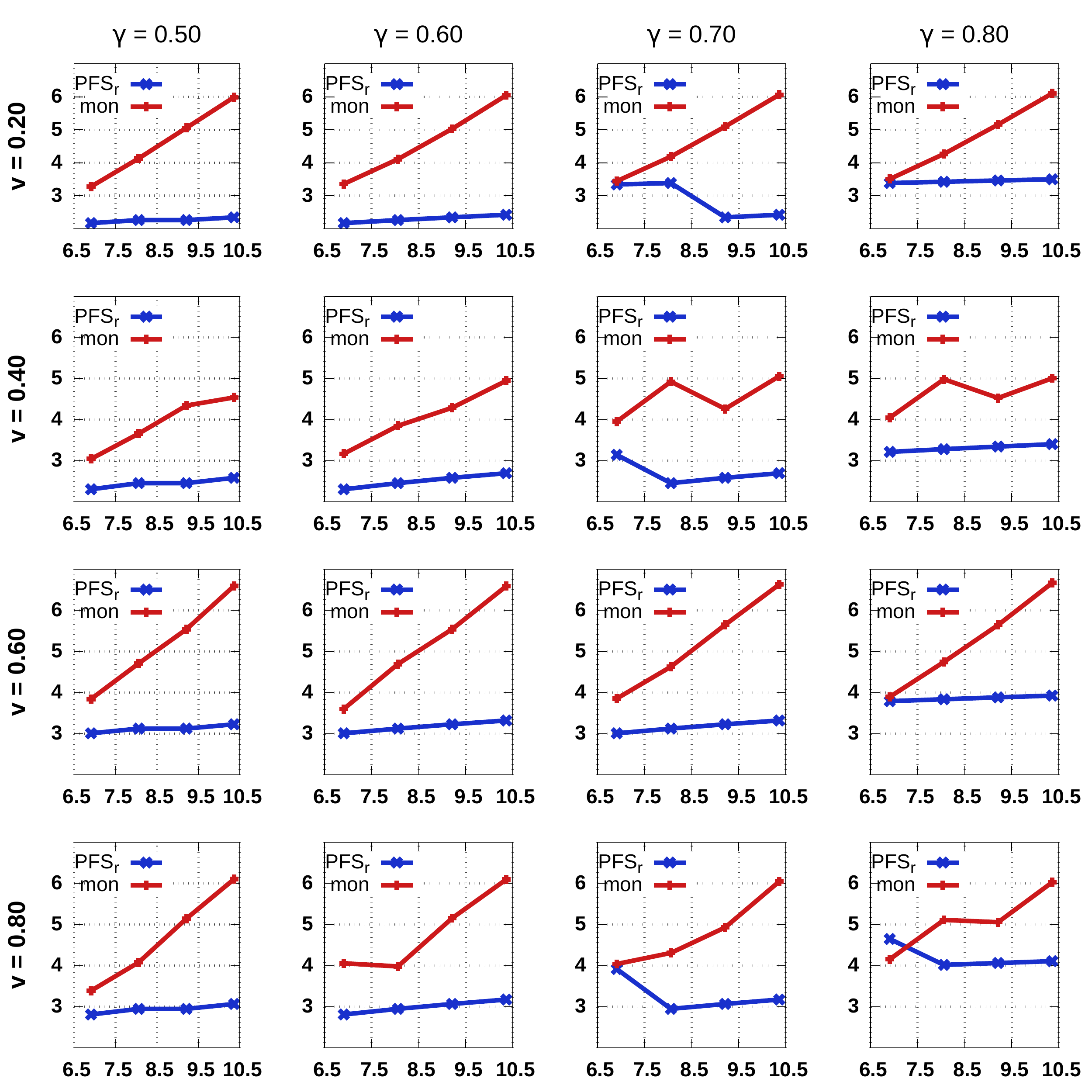}
  \caption{Regret curves for $\mathsf{PFS}_r$ and \texttt{monotone}
for different values of $v$ and $\gamma$. The value of $\gamma$ is
known to both algorithms.}
  \label{fig:know}
\end{figure}
\end{document}